\newcommand{\blind}{0}
\renewcommand{\t}{t}
\newcommand{\GPpredictCDF}{\Phi_{\mu, \sigma^2}}
\newcommand{\GPpredictmean}{\mu}
\newcommand{\GPpredictsd}{\sigma}
\newcommand{\GPpredictvar}{\sigma^2}
\newcommand{\GPobs}{y}
\newcommand{\GPupdate}{\boldsymbol{\xi}_{n+1}} 
\newcommand{\GPupdateU}{\xi_{n+1}} 
\newcommand{\GPupdateM}{m_{n+1}}
\newcommand{\GPupdateK}{k_{n+1}}
\newcommand{\GPupdateO}{\mathcal{A}_{n+1}}
\newcommand{\evaI}{\overline{\mathrm{CRPS}}_{\gamma_1}}
\newcommand{\evaP}{\overline{\mathrm{CRPS}}_{\gamma_2}}
\newcommand{\evaB}{\overline{\mathrm{CRPS}}}
\newcommand{\seqIp}{\mathrm{CRPS}_{\gamma_1}}
\newcommand{\seqIs}{\mathrm{ICRPS}_{\gamma_1}}
\newcommand{\seqPp}{\mathrm{CRPS}_{\gamma_2}}
\newcommand{\seqPs}{\mathrm{ICRPS}_{\gamma_2}}
\newcommand{\twcrps}{\mathrm{CRPS}_{\gamma}}
\newcommand{\wI}{\gamma_1}
\newcommand{\wP}{\gamma_2}
\newcommand{\sigmaY}{\sigma_{\gamma}}
\newcommand{\intB}{\mathrm{d}\gamma(u)}
\newcommand{\intI}{\mathrm{d}\gamma_1(u)}
\newcommand{\intP}{\mathrm{d}\gamma_2(u)}
\newtheorem{theorem}{Theorem}
\begin{document}

\def\spacingset#1{\renewcommand{\baselinestretch}%
{#1}\small\normalsize} \spacingset{1}

\if0\blind
{\title{\bf CRPS-Based Targeted Sequential Design with Application in Chemical Space}
\author{Lea Friedli\thanks{Corresponding author (lea.friedli@hotmail.com).
  Athénaïs Gautier and David Ginsbourger acknowledge support from the Swiss National Science Foundation project number 178858. }\hspace{.2cm} \\
    \small{Institute of Mathematical Statistics and Actuarial Science, University of Bern, Switzerland} \\[5pt]
    Athénaïs Gautier \\
    \small{DTIS, ONERA, Paris-Saclay University, 91120, Palaiseau, France} \\[5pt]
    Anna Broccard \\
    \small{Institute of Mathematical Statistics and Actuarial Science, University of Bern, Switzerland}\footnote{Master student at the University of Bern at the time of her main contributions. } \\[5pt]
    David Ginsbourger \\  \small{Institute of Mathematical Statistics and Actuarial Science, University of Bern, Switzerland}  }
    \date{}

  \maketitle \
} \fi

\begin{abstract}
Sequential design of real and computer experiments via Gaussian Process (GP) models has proven useful for parsimonious, goal-oriented data acquisition purposes. In this work, we focus on acquisition strategies for a GP model that needs to be accurate within a predefined range of the response of interest. Such an approach is useful in various fields including synthetic chemistry, where finding molecules with particular properties is essential for developing useful materials and effective medications. GP modeling and sequential design of experiments have been successfully applied to a plethora of domains, including molecule research. Our main contribution here is to use the threshold-weighted Continuous Ranked Probability Score (CRPS) as a basic building block for acquisition functions employed within sequential design. We study pointwise and integral criteria relying on two different weighting measures and benchmark them against competitors, demonstrating improved performance with respect to considered goals. The resulting acquisition strategies are applicable to a wide range of fields and pave the way to further developing sequential design relying on scoring rules. Supplemental material to augment this article is available online. 
\end{abstract}

\noindent%
{\it Keywords:}  Active Learning, Excursion Set Estimation, Gaussian Processes, Scoring Rules, Stepwise Uncertainty Reduction, Uncertainty Quantification

\spacingset{2}

% \linenumbers
\section{Introduction}

Gaussian Process (GP) modeling, originating from the geostatistical interpolation method known as kriging \citep{krige1951statistical}, extends its application beyond geostatistics \citep{rasmussen_williams}. Its fundamental concept revolves around modeling a target function $f: \mathcal{X} \rightarrow \mathbb{R}$ from a set of input locations $\mathcal{X}$ to the real numbers; in the case of co-kriging or multitask GP models, the function can also produce vector-valued outputs. Given a finite number of evaluations of $f$ or of by-products thereof (e.g., evaluations of derivatives of $f$), the objective is typically to predict $f$ at arbitrary locations. GP modeling is favored for its suitability with a limited amount of training data, provision of a full predictive distribution enabling uncertainty quantification, and capability to also handle non-Euclidean inputs through appropriate covariance kernels. In terms of the latter, the selection of kernels can be tailored to specific data types and applications. In scenarios where data collection is resource-intensive, efficiently selecting and expanding the training dataset is crucial. In this context, we employ targeted sequential design strategies, with a focus on an excursion set of the form $\Gamma := \{x \in \mathcal{X}: f(x) \geq \t\}$, using some prescribed thereshold $\t \in \mathbb{R}$. In doing so, we follow the now established framework of GP-based excursion set estimation \citep[e.g.,][and references therein]{French2013SpatiotemporalEL, chevalier2013estimating, Bolin2012ExcursionAC, azzimonti2016contributions} and focus on two interconnected elements: suitable evaluation metrics and targeted acquisition criteria. \\

Scoring rules \citep{gneiting2007strictly} are widely used tools for assessing probabilistic forecasts by comparing a predictive distribution with a single value (the materialized response). Scoring rules evaluate both the statistical consistency between predictions and observations (calibration) and the sharpness of the prediction. A scoring rule is called proper if no alternative forecast distribution can yield a lower expected score than the true data-generating distribution. It is strictly proper if the expected score is uniquely minimized by the true data-generating distribution. Because proper scoring rules promote careful evaluations and are firmly rooted in fundamental decision-theoretic principles, they are regarded as essential tools in scientific practice \citep{gneiting2007strictly, brocker2007scoring, thorarinsdottir2013using}. Furthermore, links between scoring rules and divergences have been actively investigated in the literature of forecast evaluation \citep{thorarinsdottir2013using}. In this study, we focus on the widely used Continuous Ranked Probability Score (CRPS; \citeauthor{matheson1976scoring}~\citeyear{matheson1976scoring}). There are several advantages that motivate using CRPS \citep{gneiting2007probabilistic, waghmare2025proper}: Echoing with the above, the CRPS has been proven to define a proper scoring rule, and to be strictly proper relative to probability measures that have finite first moment. Furthermore, the CRPS is distance sensitive, in the sense that it rewards forecasts that allocate probability mass closer to the realized observation; and it enjoys theoretical uniqueness as it is the only scoring role satisfying translation invariance, homogeneity, and kernel representability.\\

An interesting category of scoring rules for goal-oriented prediction evaluation consists of weighted scoring rules, allowing to emphasize particular outcomes by incorporating a weighting measure \citep{gneiting2011comparing}. For the CRPS, \citet{allen2023evaluating} outline three potential approaches to use a weighting measure. Here, we consider the threshold-weighted CRPS introduced by \citet{matheson1976scoring} and \citet{gneiting2011comparing}, which has been widely used for evaluating forecasts of extreme weather events \citep[e.g.,][]{lerch2013comparison}. To reflect the special importance of predictions in/near the excursion set $\Gamma$, we consider both an indicator weight on $\Gamma$, and a Gaussian weight centered at $\t$.  \\

In terms of data acquisition, we aim to extend the training set by strategically adding promising candidates, thereby improving the model with respect to our goals. While in the field of Bayesian global optimization \citep{Kushner1964ANM, Mockus1978, Mockus1989BayesianAT}, the focus is on finding the optimum or an optimizer of $f$, we are focusing on the excursion set $\Gamma$. We rely on a sequential design strategy with selecting each training point as if it were the final one, known as a one-step look-ahead (or myopic) strategy. There exist different criteria to choose the next point, depending on the objective of interest. For a focus on the excursion set $\Gamma$, we distinguish between pointwise criteria, and integral criteria. Pointwise criteria focus solely on local GP predictions, with popular examples including the Targeted Mean Square Error (TMSE; \citeauthor{picheny2010adaptive}~\citeyear{picheny2010adaptive}) and entropy-based criteria \citep{cole2023entropy}. Integral criteria consider the predictive distribution over the entire domain $\mathcal{X}$ and well-known approaches include the Targeted Integrated Mean Square Error (TIMSE; \citeauthor{picheny2010adaptive}, \citeyear{picheny2010adaptive}) and Stepwise Uncertainty Reduction (SUR; \citeauthor{bect2012sequential}~\citeyear{bect2012sequential}) criteria such as the Integrated Bernoulli Variance (IBV; \citeauthor{bect2019supermartingale}, \citeyear{bect2019supermartingale}). \\

Given that the threshold-weighted CRPS, with an appropriate weighting measure, evaluates the GP model's performance while emphasizing excursions, it also provides a promising foundation for designing targeted acquisition criteria. However, in their original form, scoring rules compare the prediction with the materialized response, which is unavailable during data acquisition. Yet, one may rely on the expected scoring rule by integrating the score of forecasting $y$ by  $F$  under the assumption that $y$ is random with distribution~$F$. This is also known as the entropy function of the score \citep{gneiting2007strictly}. The main contribution of this paper is the introduction and development of new pointwise and SUR data acquisition criteria based on the expected threshold-weighted CRPS. The CRPS has already been used in the realm of GP modeling; however, to the best of our knowledge, its application has been limited to parameter selection \citep[e.g.,][]{petit2023parameter}. \\

%% Maybe we can reorganise this to better adress Rev 1:
We showcase and compare the derived criteria and resulting sequential strategies on chemical test cases revolving around the Photoswitch dataset \citep{thawani2020photoswitch}. In their original study, \citet{thawani2020photoswitch} benchmarked GP models against several alternatives on the Photoswitch dataset, including Random Forests, Attentive Neural Processes, Graph Convolutional Networks, Graph Attention Networks, and Directed Message Passing Neural Networks. While such models primarily target improvements in predictive accuracy within a standard regression framework, the focus of the present work is different: we aim to develop uncertainty-based sequential design strategies that efficiently guide data acquisition. Within this setting, GPs remain the natural modeling choice, as they provide closed-form analytical expressions for our acquisition criteria. This focus is not restrictive, as the best-performing model in the benchmark of \citet{thawani2020photoswitch} was a GP, and GPs consistently demonstrated strong performance in molecular property prediction \citep{deringer2021gaussian, griffiths2024gauche}.\\

Within the GP framework, the choice of kernel heavily depends on the molecule representation used to encode chemical structures. We consider molecules represented by molecular fingerprints \citep{christie1993structure, johnson1990concepts}, which encode the presence or absence of characteristic substructures using binary digits. For fingerprint representations, the Tanimoto kernel \citep{gower1971general} is widely utilized \citep[e.g.,][]{ralaivola2005graphkernels, miyao2019iterative, tripp2024tanimoto}, favored due to its simplicity, efficiency, and effectiveness in measuring the overlap between the presence or absence of molecular features. In addition to Morgan fingerprints, \citet{thawani2020photoswitch} explored alternative molecular representations, and determined that the best-performing combination was the GP model with a Tanimoto kernel and their own hybrid molecular descriptor called fragprints. However, in this work, we continue to use Morgan fingerprints, as they are more broadly known, while exploring a few alternative kernel choices in the supplementary material.\\

By combining the Tanimoto kernel with our targeted CRPS-based criteria, we demonstrate how scoring rules can be leveraged for efficient sequential design towards uncovering molecules of interest. The novel closed-form expressions for CRPS-based acquisition functions presented in this paper are broadly applicable beyond chemistry and open new avenues for advancing sequential design methodologies based on scoring rules.
The structure of the remaining paper is as follows: Section \ref{Sec2} reviews the fundamentals of GP modeling, excursion set estimation and targeted sequential design. While Section \ref{Sec3} is dedicated to the threshold-weighted CRPS and the development of the acquisition criteria based thereon, the numerical tests are introduced in Section \ref{Sec4} and the results are presented in Section \ref{Sec5}. Section \ref{Sec6} wraps up the main body with a discussion and concluding words.

\section{Gaussian Process Modeling and Sequential Design}
\label{Sec2}

% In this section, we review the fundamental concepts of GP modeling (Section \ref{GPmodeling}) and discuss Matérn and Tanimoto Kernels (Section \ref{tanimotokernel}). Next, we address the excursion set (Section \ref{excursion}) and provide an overview of the basic principles behind targeted sequential design (Section \ref{sequentialdesign}).

\subsection{Gaussian process modeling and excursion set estimation} 
\label{GPmodeling}
Gaussian Processes (GPs) provide a flexible probabilistic framework for modeling unknown functions. A GP $\boldsymbol{\xi}=\left(\xi(\boldsymbol{x})\right)_{\boldsymbol{x} \in \mathcal{X}}$ is a collection of real-valued random variables $\xi(\boldsymbol{x})$ defined on a common probability space, such that for any finite subset $\{\boldsymbol{x}_1, \dots, \boldsymbol{x}_n\} \subset \mathcal{X}$ with $n \geq 1$, the random vector $(\xi(\boldsymbol{x}_1), \dots, \xi(\boldsymbol{x}_n))$ follows a multivariate Gaussian distribution. The distribution of a GP is characterized by its mean $m : \mathcal{X} \to \mathbb{R}$,  $m(\boldsymbol{x}) := \mathbb{E}[\xi(\boldsymbol{x})] $ and covariance function $k : \mathcal{X} \times \mathcal{X} \to \mathbb{R} $, $k(\boldsymbol{x}, \boldsymbol{x}') := Cov (\xi(\boldsymbol{x}), \xi(\boldsymbol{x'}))$.\\

In GP modeling, an unknown function $f: \mathcal{X} \rightarrow \mathbb{R}$ is assumed to be a realization of a GP $\boldsymbol{\xi}$. Here, we assume we observe noisy evaluations of $f$ at \( n \) input locations $\{\boldsymbol{x}_1, \dots, \boldsymbol{x}_n\} \subset \mathcal{X}$, yielding the observations: $ Z_i = f(\boldsymbol{x_i}) + \varepsilon_i, \ \text{where } \varepsilon_i  \sim \mathcal{N}(0, \tau^2) \ \text{i.i.d.} $. Conditioning the GP on the observation event denoted as $ \mathcal{A}_{n} = \{ \boldsymbol{Z}_n = \boldsymbol{z}_n \}$ results in a posterior Gaussian distribution characterized by its predictive mean $m_n(\boldsymbol{x})$ and posterior covariance function  $k_n(\boldsymbol{x}, \boldsymbol{x'})$, whose expressions are recalled in the supplementary material (Part~A).\\

A key advantage of GPs in our context is their ability to support the analysis of excursion sets of the form $\Gamma := \{\boldsymbol{x} \in \mathcal{X}: f(x) \geq \t\}$, where $\t \in \mathbb{R}$ is prescribed. Indeed, we can compute the excursion probability at any location $\boldsymbol{x} \in \mathcal{X}$ in closed form:
\begin{align}
\label{exprob}
        p_n(\boldsymbol{x}) = \mathbb{P} \left( \xi_n(\boldsymbol{x}) \geq \t \right) 
        = \mathbb{P} \left( \frac{\xi_n(\boldsymbol{x}) - m_n(\boldsymbol{x})}{\sqrt{k_n(\boldsymbol{x},\boldsymbol{x})}} \geq \frac{\t - m_n(\boldsymbol{x})}{\sqrt{k_n(\boldsymbol{x},\boldsymbol{x})}} \right) 
        = \Phi \left( \frac{m_n(\boldsymbol{x}) - \t}{\sqrt{k_n(\boldsymbol{x},\boldsymbol{x})}} \right), 
\end{align}
where $\Phi(\cdot)$ is the CDF of the univariate standard Gaussian distribution. \\

The excursion probability can be utilized for classification purposes \citep{bect2012sequential}. Considering a hard classifier $\eta_n: \mathcal{X} \rightarrow \{0,1\}$, we denote the probability of misclassifications by $\tau_n(\boldsymbol{x}) = \mathbb{P}(\eta_n(\boldsymbol{x}) \neq \mathbbm{1}\{ \xi(\boldsymbol{x}) \geq \t \})$. It holds that
$\tau_n(\boldsymbol{x}) = p_n(\boldsymbol{x}) + (1-2 p_n(\boldsymbol{x}))\eta_n(\boldsymbol{x})$, which is minimized if $\eta_n(\boldsymbol{x}) = \mathbbm{1}\{ p_n(\boldsymbol{x}) \geq 0.5 \}$. Using this classifier, we thus arrive at the following estimator for the excursion set \citep{bect2012sequential}:
\begin{equation}
\label{gammahat}
    \hat{\Gamma} := \{\boldsymbol{x} \in \mathcal{X}: p_n(\boldsymbol{x}) \geq 0.5\} = \{\boldsymbol{x} \in \mathcal{X}: m_n(\boldsymbol{x}) \geq \t\}. 
\end{equation}
Depending on the specific objectives of the application, an alternative threshold than $0.5$ might be of interest. If sensitivity is prioritized, a threshold below $0.5$ might be of interest, such that more locations are classified as in the excursion set. Conversely, raising it above $0.5$ shifts the focus on specificity, leading to a more conservative classification \citep{Azzimonti_2019}. \\

GP-based excursion set estimation extends beyond binary classification. The full probabilistic predictions provide richer information, including uncertainty quantification, and can be leveraged in various ways. In particular, this uncertainty can guide active learning strategies through targeted sequential design, ensuring that new evaluations refine the GP model with respect to the excursion set. Additionally, it can be incorporated into scoring-based approaches to evaluate, and ultimately improve, predictions.

\subsection{Targeted Sequential Design}
\label{sequentialdesign}
To refine the GP model with a focus on the excursion set, it is essential to carefully choose added training points. We consider one-step look-ahead (or myopic) strategies, which rely on a similar basic brick: from the `current' conditional distribution of $\boldsymbol{\xi_n}$ given $\mathcal{A}_n$ based on the initial $n$ training points, choose the next point $\boldsymbol{x_{n+1}}$ by optimizing a selection criterion, a.k.a. acquisition function. Subsequently, the noise-contaminated $Z_{n+1}$ is collected and $\mathcal{A}_{n}$ is augmented into $\mathcal{A}_{n+1}$, leading to an update of the GP to $\boldsymbol{\xi_{n+1}}$. There exist a variety of selection criteria for sequential design; we distinguish between pointwise and SUR criteria.  

\paragraph{Pointwise selection criteria}
Given a GP $\boldsymbol{\xi_n} = \left(\xi_n(\boldsymbol{x})\right)_{\boldsymbol{x} \in \mathcal{X}}$, a pointwise selection criterion considers the marginal distributions and selects the next acquisition point \(\boldsymbol{x}_{n+1} \in \mathcal{X}\) by maximizing
\begin{equation}
    \label{pointwise}
     \boldsymbol{x} \mapsto G_n(\boldsymbol{x}) = \mathcal{G}(\xi_n(\boldsymbol{x})),
\end{equation}
where $\mathcal{G}(\cdot)$ maps the marginal Gaussian $\xi_n(\boldsymbol{x}) \sim \mathcal{N}(m_n(\boldsymbol{x}), k_n(\boldsymbol{x}, \boldsymbol{x}))$ to a real value. A very simplistic approach would be to consider the variance $G_n(\boldsymbol{x}) = k_n(\boldsymbol{x}, \boldsymbol{x})$ of all candidate points $\boldsymbol{x} \in \mathcal{X}$ and select one with highest variance as $\boldsymbol{x_{n+1}}$. For targeted sequential design focused on the excursion set $\Gamma$ %(Eq. \ref{excursionset})
, the objective is to define a local criterion that balances the trade-off between proximity to $\Gamma$ and high predictive uncertainty \citep[][Sections \ref{twCRPS_acquisition_pw} and \ref{alternatives_acquisition}]{chevalier2014fast}. 

\paragraph{SUR selection criteria}
Stepwise Uncertainty Reduction (SUR; \citeauthor{bect2019supermartingale}~\citeyear{bect2019supermartingale}) criteria build upon notions of  uncertainty that may involve the GP model on the entire domain rather than at a single point, and consider how such a measure of uncertainty evolves at time $n + 1$ if the next evaluation is conducted at $\boldsymbol{x_{n+1}} = \boldsymbol{x}$. Given a GP model $\boldsymbol{\xi_n}$ at time~$n$, a SUR criterion selects the next acquisition point by minimizing:
\begin{equation}
    \label{SUR}  
    \boldsymbol{x} \mapsto J_n(\boldsymbol{x}) = \mathbb{E}_{n, \boldsymbol{x}}[\mathcal{H}(\GPupdate)],
\end{equation}
where $\GPupdate \sim \mathrm{GP}(\GPupdateM, \GPupdateK)$ is the future GP conditioned on $\GPupdateO$. Here $\mathcal{H}(\cdot)$ stands for an uncertainty functional that maps the whole (updated) GP model to a real-valued variable that summarizes the remaining uncertainty. Furthermore, $\mathbb{E}_{n, \boldsymbol{x}}(\cdot)$ denotes the conditional expectation with respect to $\boldsymbol{\xi_n}$ when  $\boldsymbol{x_{n+1}} = \boldsymbol{x}$. For a simplistic acquisition approach based on variance, we could employ the integrated future variance $J_n(\boldsymbol{x}) = \int \GPupdateK (\boldsymbol{x'}, \boldsymbol{x'}) \mathbb{P}_{\mathcal{X}}(\mathrm{d}\boldsymbol{x'})$. SUR criteria targeting response-dependent regions are discussed in Sections \ref{twCRPS_acquisition_int} and~\ref{alternatives_acquisition}. 

\section{Threshold-weighted CRPS for Sequential Design}
\label{Sec3}

\subsection{Threshold-weighted CRPS}
\label{twcrps_section}

Scoring rules \citep{gneiting2007strictly} provide a tool for evaluating probabilistic forecasts by measuring the alignment between a predictive distribution and the materialized value. One very popular scoring rule is the Continuous Ranked Probability Score \citep[CRPS;][]{matheson1976scoring}, which for a predictive distribution with CDF $F(\cdot)$ and a materialized value $y$ is defined as,
    \begin{equation}
     \label{CRPS}
    \mathrm{CRPS}(F, y) = \int_{-\infty}^{\infty} [F(u) - \mathbbm{1}\{y \leq u\}]^2 \, \mathrm{d}u. 
    % = \mathbb{E}_F |X-y| - \frac{1}{2} \mathbb{E}_F |X-X'|,
    \end{equation}
    % and $\mathbb{E}_F$ to denote an expected value with respect to $F$.

In GP modeling, we would like to evaluate the performance of $\xi_n(\boldsymbol{x'})\sim \mathcal{N}\left(m_n(\boldsymbol{x'}), k_n(\boldsymbol{x'},\boldsymbol{x'}) \right)$ in point $\boldsymbol{x'} \in \mathcal{X}$ with respect to $f(\boldsymbol{x'})$. We only consider one point $\boldsymbol{x'}$ at a time (univariate), for which we use $\GPpredictCDF$ to denote the Gaussian predictive distribution with mean $\GPpredictmean = m_n(\boldsymbol{x'})$ and variance $\GPpredictvar = k_n(\boldsymbol{x', \boldsymbol{x'}})$. As in the traditional setting, we use $\GPobs = f(\boldsymbol{x'})$ to denote the materialized value. In this setting, the CRPS writes as \citep{gneiting2007strictly}, 
    \begin{equation}
        \mathrm{CRPS}\left(\GPpredictCDF, \GPobs \right) = \GPpredictsd \left[  
        \widetilde{\GPobs} \left( 2 \Phi\left( \widetilde{\GPobs} \right) -1 \right)
        + 2 \phi \left( \widetilde{\GPobs} \right) - \frac{1}{\sqrt{\pi}},
        \right].
    \end{equation}
using $\widetilde{\GPobs} = (\GPobs - \GPpredictmean)/\GPpredictsd$. The CRPS is negatively oriented, meaning that better predictions correspond to lower scores. \\
    
Weighted scoring rules enable a targeted assessment of forecasts. The threshold-weighted CRPS \citep{matheson1976scoring, gneiting2011comparing} extends the CRPS by considering a weighting measure $\gamma$, which emphasizes the values of interest:
    \begin{equation}
    \label{twcrps_def}
    \twcrps(F, y) = \int_{-\infty}^{\infty} [F(u) - \mathbbm{1}\{y \leq u\}]^2 \, \intB. 
    % = \mathbb{E}_F |v(X)-v(y)| - \frac{1}{2} \mathbb{E}_F |v(X)-v(X)'|.
    \end{equation}

While $\gamma$ may potentially be taken as any Borel measure on $\mathbb{R}$, in our excursion setting, %which focuses on the excursion set $\Gamma$ %(Eq.~\ref{excursionset})
we consider two types of weighting measures (characterized by their densities with respect to the Lebesgue measure $\lambda$). The first weighting measure $\wI$ only considers values above the threshold $\t$ and is characterized by $\intI = \mathbbm{1}\{\t \leq u\} \mathrm{d} \lambda(u)$. The second weighting measure $\wP$ admits a Gaussian probability density centered on the threshold $\t$ and with variance $\sigmaY^2 >0$: $\intP = \frac{1}{\sigmaY} \phi \left( \frac{u - \t}{\sigmaY} \right)  \mathrm{d} \lambda(u)$, where $\phi: \mathbb{R} \rightarrow (0,\infty), \phi(v):=\frac{1}{\sqrt{2\pi}}\exp^{-\frac{v^2}{2}}$.  
%is the probability density function of a standard Gaussian. 

\begin{theorem}\label{scoring_twcrps}
    For a predictive distribution $\GPpredictCDF$, 
    it holds with weighting measure $wI$ that 
        \begin{align*}
            \seqIp(\GPpredictCDF, \GPobs) 
            = \GPpredictsd \left[ -\widetilde{\t} \Phi(\widetilde{\t})^2
            + \widetilde{y^{\t}} \left( 2 \Phi(\widetilde{y^{\t}}) - 1  \right) +  \left( 2 \phi \left( \widetilde{y^{\t}} \right) - 2 \phi \left( \widetilde{\t} \right) \Phi(\widetilde{\t}) \right) -  \frac{1}{\sqrt{\pi}} \left( 1 - \Phi(\widetilde{\t} \sqrt{2}) \right) \right],
        \end{align*}
        where $\widetilde{\t} = (\t - \GPpredictmean)/\GPpredictsd$ and $\widetilde{\GPobs^t} = \Big(\max(\GPobs,\t) - \GPpredictmean \Big)/\GPpredictsd$. 
        
        For the weighting measure $\wP$ (with mean $t$ and variance $\sigmaY^2$), one obtains
        \begin{align*}
            \seqPp(\GPpredictCDF, \GPobs) 
            &= \Phi \left( \begin{pmatrix} 0 \\ 0 \end{pmatrix}; \begin{pmatrix} \GPpredictmean - \t \\ \GPpredictmean - \t \end{pmatrix},  
            \begin{pmatrix} \sigmaY^2 + \GPpredictvar & \sigmaY^2 \\ \sigmaY^2 & \sigmaY^2 + \GPpredictvar \end{pmatrix}  \right) \\
            &\quad \quad -2  
            \Phi \left( \begin{pmatrix} 0 \\ 0 \end{pmatrix}; \begin{pmatrix} \GPpredictmean - \t \\ y - \t \end{pmatrix},  
            \begin{pmatrix} \sigmaY^2 + \GPpredictvar & \sigmaY^2 \\ \sigmaY^2 & \sigmaY^2  \end{pmatrix}  \right) + \Phi \left( \frac{\t - \GPobs}{\sigmaY} \right),
        \end{align*}
        where $\Phi(\cdot; \boldsymbol{\mu}, \boldsymbol{\Sigma})$ is the bivariate Gaussian CDF with parameters $\boldsymbol{\mu} \in \mathbb{R}^2$ and $\boldsymbol{\Sigma} \in \mathbb{R}^{2 \times 2}$.
\end{theorem}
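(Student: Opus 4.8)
The plan is to remove the indicator first, since for any weighting measure $[\GPpredictCDF(u) - \mathbbm{1}\{\GPobs \leq u\}]^2$ equals $\GPpredictCDF(u)^2$ on $\{u < \GPobs\}$ and $(1-\GPpredictCDF(u))^2$ on $\{u \geq \GPobs\}$. For $\wI$ the domain is $[\t, \infty)$, so this produces a case split: if $\GPobs \geq \t$ the first branch acts on $[\t, \GPobs)$ and the second on $[\GPobs, \infty)$, while if $\GPobs < \t$ only the second branch survives on all of $[\t, \infty)$. Both cases are handled at once by the cut-off $\max(\GPobs, \t)$, which is exactly what enters through $\widetilde{y^\t}$. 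After substituting $z = (u - \GPpredictmean)/\GPpredictsd$, the score reduces to
\[
\seqIp(\GPpredictCDF, \GPobs) = \GPpredictsd \left[ \int_{\widetilde{\t}}^{\widetilde{y^\t}} \Phi(z)^2 \, \dz + \int_{\widetilde{y^\t}}^{\infty} (1-\Phi(z))^2 \, \dz \right],
\]
with $\Phi, \phi$ the standard normal CDF and PDF.

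Next I would evaluate these two integrals by integration by parts. The only non-elementary ingredient is $\int \phi(z)^2 \, \dz = \frac{1}{2\sqrt{\pi}}\Phi(z\sqrt{2}) + C$, which follows from the identity $\phi(z)^2 = \frac{1}{\sqrt{2\pi}}\phi(z\sqrt{2})$; together with $\int z\phi(z)\,\dz = -\phi(z)$ this yields the antiderivatives $\int \Phi^2 = z\Phi^2 + 2\Phi\phi - \frac{1}{\sqrt{\pi}}\Phi(z\sqrt{2})$ and $\int (1-\Phi)^2 = z(1-\Phi)^2 - 2\phi(1-\Phi) - \frac{1}{\sqrt{\pi}}\Phi(z\sqrt{2})$. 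Evaluating the definite integrals (using $z(1-\Phi(z))^2 \to 0$ and $\Phi(z\sqrt{2}) \to 1$ at $+\infty$) and adding the two pieces, the terms at $\widetilde{y^\t}$ telescope cleanly: $z\Phi^2 - z(1-\Phi)^2 = z(2\Phi-1)$, the two $2\phi$-terms combine to $2\phi$, and the $\Phi(z\sqrt{2})$ contributions at $\widetilde{y^\t}$ cancel. What remains is precisely the stated expression, with the lower-limit terms at $\widetilde{\t}$ giving $-\widetilde{\t}\Phi(\widetilde{\t})^2$, $-2\phi(\widetilde{\t})\Phi(\widetilde{\t})$, and $-\frac{1}{\sqrt{\pi}}(1 - \Phi(\widetilde{\t}\sqrt{2}))$.

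For $\wP$ I would switch to a probabilistic argument that sidesteps integrating Gaussian CDFs. Regard $\wP$ as the law of $U \sim \mathcal{N}(\t, \sigmaY^2)$, so that $\seqPp(\GPpredictCDF, \GPobs) = \Esp[(\GPpredictCDF(U) - \mathbbm{1}\{\GPobs \leq U\})^2]$, and expand the square into three expectations. The linear term is immediate: $\Esp[\mathbbm{1}\{\GPobs \leq U\}] = \Prob(U \geq \GPobs) = \Phi((\t - \GPobs)/\sigmaY)$. For the other two, introduce independent copies $\xi_1, \xi_2 \sim \mathcal{N}(\GPpredictmean, \GPpredictvar)$, independent of $U$, and write $\GPpredictCDF(U) = \Prob(\xi_1 \leq U \mid U)$. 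Then $\Esp[\GPpredictCDF(U)^2] = \Prob(\xi_1 \leq U, \xi_2 \leq U)$ and $\Esp[\GPpredictCDF(U)\mathbbm{1}\{\GPobs \leq U\}] = \Prob(\xi_1 \leq U, \GPobs \leq U)$. Rewriting these events as $\{\xi_1 - U \leq 0, \xi_2 - U \leq 0\}$ and $\{\xi_1 - U \leq 0, \GPobs - U \leq 0\}$ and reading off the means and covariances of the Gaussian vectors $(\xi_1 - U, \xi_2 - U)$ and $(\xi_1 - U, \GPobs - U)$ — where the off-diagonal entry in each case equals $\var(U) = \sigmaY^2$ and the diagonals are $\GPpredictvar + \sigmaY^2$ (resp.\ $\sigmaY^2$) — produces exactly the two bivariate normal CDF terms in the statement.

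The hard part is not conceptual but clerical: the $\wI$ computation demands careful tracking of the integration-by-parts antiderivatives, the boundary behavior at $+\infty$, and the $\max(\GPobs,\t)$ case distinction, where a sign slip would be easy. By contrast, the $\wP$ identity is short once one spots the representation of $\GPpredictCDF(U)^2$ and $\GPpredictCDF(U)\mathbbm{1}\{\GPobs \leq U\}$ as joint excursion probabilities of independent Gaussians; the remaining effort is only the routine covariance bookkeeping.
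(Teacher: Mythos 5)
Your proposal is correct and takes essentially the same route as the paper's proof (supplementary Part~C.1): for $\wI$, the $\max(\GPobs,\t)$ case split, the reduction to $\int_{\widetilde{\t}}^{\widetilde{y^{\t}}}\Phi^2$ and $\int_{\widetilde{y^{\t}}}^{\infty}(1-\Phi)^2$, and the antiderivatives built on $\phi(z)^2=\phi(z\sqrt{2})/\sqrt{2\pi}$ are exactly the computation needed, and I verified your boundary behavior at $+\infty$, the telescoping of the $z(2\Phi-1)$, $2\phi$, and $\Phi(z\sqrt{2})$ terms at $\widetilde{y^{\t}}$, and the lower-limit contributions, all of which reproduce the stated formula; for $\wP$, your expansion of $\Esp\bigl[(\GPpredictCDF(U)-\mathbbm{1}\{\GPobs\le U\})^2\bigr]$ with $U\sim\mathcal{N}(\t,\sigmaY^2)$ yields precisely the three terms of the theorem, with correct means and covariances for $(\xi_1-U,\xi_2-U)$ and $(\xi_1-U,\GPobs-U)$. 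The only stylistic difference is that you derive the two bivariate-normal-CDF identities on the spot via the latent representation $\GPpredictCDF(U)=\Prob(\xi_1\le U\mid U)$ rather than invoking tabulated Gaussian integrals, which is mathematically equivalent and, if anything, more self-contained.
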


\begin{proof}
    See supplementary material (Part~C.1).
\end{proof}

For illustration purposes, Figure \ref{fig:illu_twcrps} depicts the discussed versions of the CRPS for a Gaussian $\GPpredictCDF$ (solid line), showing the area under the integral for the CRPS (a and d), $\seqIp$ (b and e), and $\seqPp$ (c and f) across two scenarios: In the first row (a-c) we consider $y < \t < \GPpredictmean$, referring to a point which does not lie in the excursion set ($y < \t$), but is falsely predicted to be in the excursion set ($\GPpredictmean > \t$, see Eq.~\ref{gammahat}). In the second row (d-f), we use $\GPpredictmean < \t < y$, such that the point does lie in the excursion set ($y > \t$), but is falsely predicted to be not ($\GPpredictmean < \t$). Notably, both the (unweighted) CRPS and $\seqPp$ are symmetric, yielding the same value for both scenarios. Contrarily, the $\seqIp$ has a higher value in the second scenario. 

\begin{figure}
    \centering
    \includegraphics[width=0.95\linewidth]{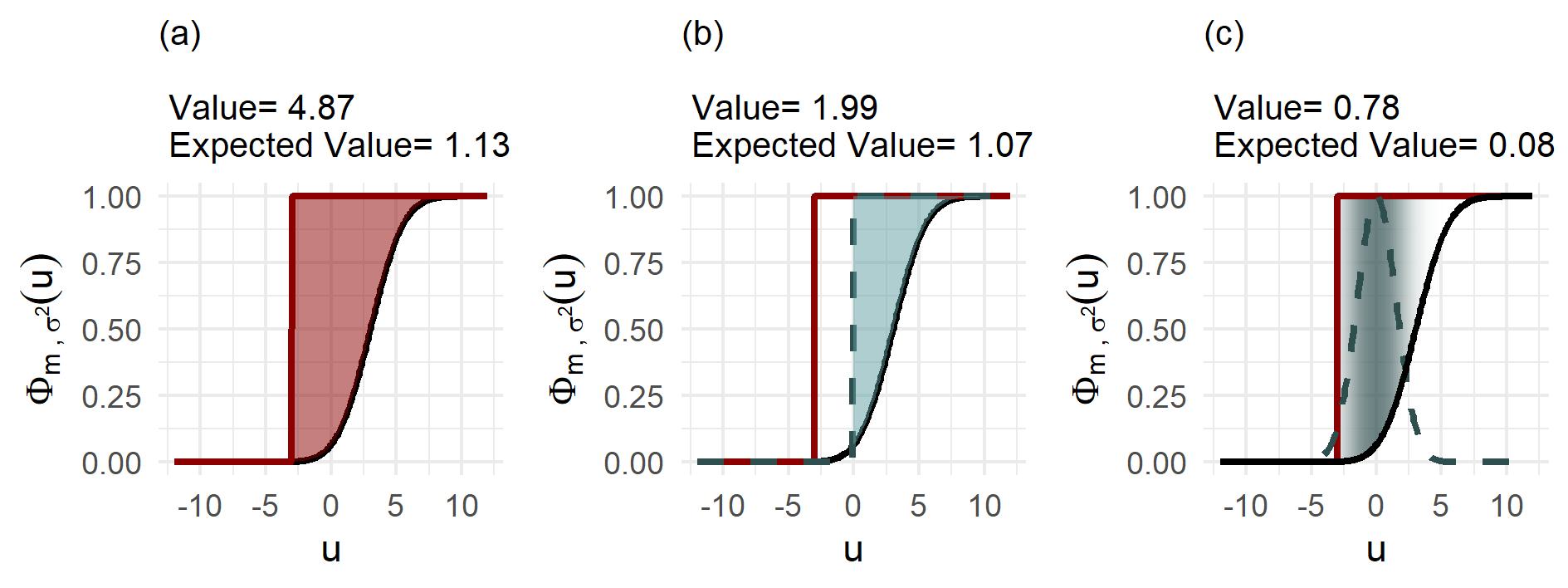} 
    \includegraphics[width=0.95\linewidth]{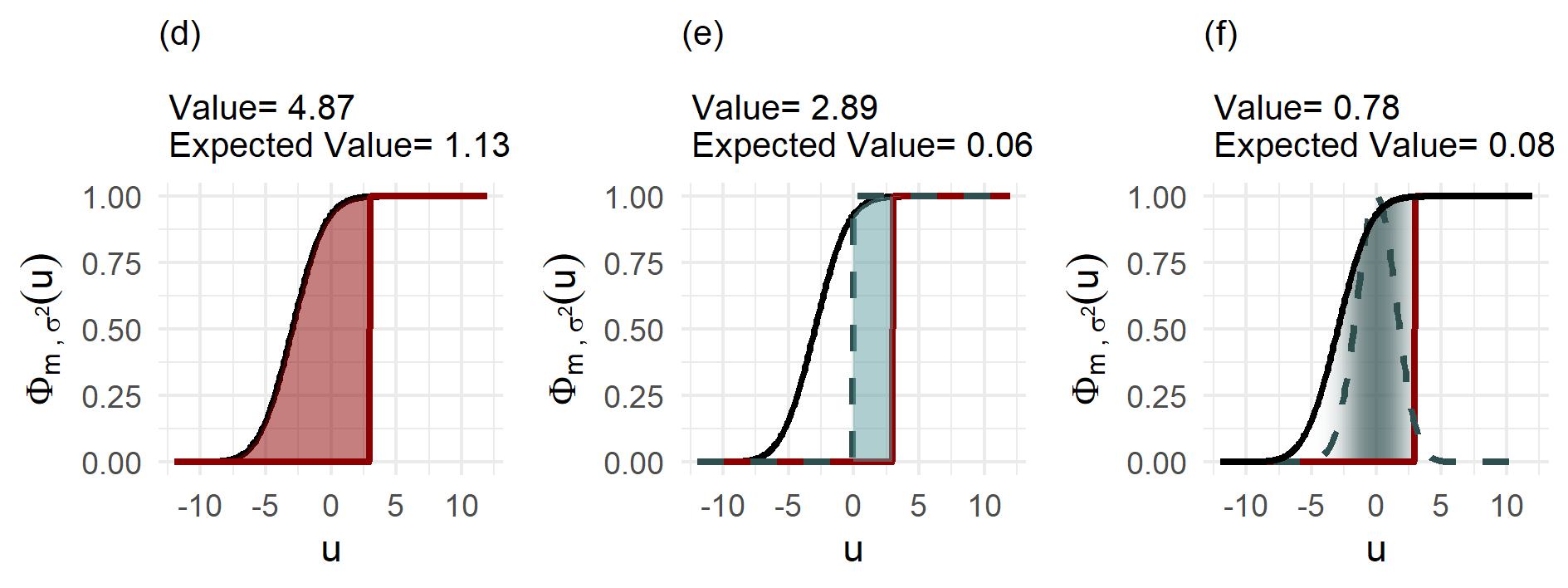} 
    \caption{Illustrations of the $\mathrm{CRPS}\left(\GPpredictCDF, \GPobs \right)$ and $\twcrps\left(\GPpredictCDF, \GPobs \right)$ for a Gaussian CDF $\GPpredictCDF(u)$ (solid line), an observation $y$ (indicator line) and a threshold $\t$. In the first row (a-c) we have $y = -3, \t=0, \GPpredictmean = 3$ and in the second row (d-f) we have $\GPpredictmean=-3, \t=0, y = 3$; we use $\GPpredictsd = 2$ for the predictive Gaussian. The individual plots display the area under the integral for the CRPS (a and d), $\seqIp$ (b and e) and $\seqPp$ (c and f; $\sigmaY = 1.5$), with the dashed lines indicating the weighting measures $\wI$ and $\wP$. The numbers refer to the value and expected value of the respective criterion. }
    \label{fig:illu_twcrps}
\end{figure}

\subsection{Pointwise selection criteria}
\label{twCRPS_acquisition_pw}

Given that the threshold-weighted CRPS evaluates both accuracy and uncertainty of the GP model with respect to the excursion set of interest, it is a promising basis for an acquisition criterion. For a pointwise criterion, we consider the threshold-weighted CRPS of all the candidate points, and add the one to the training set which has the highest score~(Eq.~\ref{pointwise}). However, in its original form for evaluation (Eq.~\ref{twcrps_def}), the threshold-weighted CRPS relies on the materialized value $\GPobs$ in the candidate point, which is unavailable during data acquisition. Therefore, we consider an expected version, where the expectation is taken with respect to the current predictive distribution. This is also known as the expected score or entropy function for proper scoring rules \citep{gneiting2007strictly}. The expected threshold-weighted CRPS with weighting measure $\gamma$ for predictive CDF $F$ is defined as,
    \begin{align}
    \label{EtwCRPS}
    \twcrps(F) 
    = \int_{-\infty}^{\infty} \twcrps(F, y') \mathrm{d}F(y'). 
    % = \frac{1}{2} \mathbb{E}_F |v(X)-v(X')|,
    % &= \mathbb{E}_{Y\sim F} \left[ \mathrm{twCRPS}(F, Y) \right] \\
    % &= \int_{-\infty}^{\infty} \int_{-\infty}^{\infty} [F(u) - \mathbbm{1}\{y' \leq u\}]^2 w(u) f(y') \, \mathrm{d}u \; \mathrm{d}y' \\
    \end{align}
%with $\rho(\cdot)$ denoting the PDF of the predictive distribution $F$. 
The following theorem provides an alternative representation of this expected threshold-weighted CRPS and derives explicit formulas for Gaussian predictive distributions with weighting measures $\wI$ and $\wP$.

\begin{theorem}\label{pointwise_twcrps}
For a predictive CDF $F(\cdot)$ and a weighting measure $\gamma$, it holds,
\begin{equation*}
   \twcrps(F) = \int F(u) \left( 1-F(u) \right)  \intB.   
\end{equation*}
Particularly, for $F = \GPpredictCDF$ and $\intI = \mathbbm{1}\{\t \leq u\}\mathrm{d} \lambda(u)$, one obtains,
\begin{equation*}
    \seqIp(\GPpredictCDF) = \GPpredictsd \Big[ \widetilde{\t} \Phi(\widetilde{\t})^2 - \widetilde{\t}\Phi(\widetilde{\t})  + \frac{1}{\sqrt{\pi}} - \frac{1}{\sqrt{\pi}} \Phi(\widetilde{\t} \sqrt{2}) + 2 \phi(\widetilde{\t}) \Phi(\widetilde{\t}) - \phi(\widetilde{\t}) \Big],
\end{equation*}
with $\widetilde{\t} = (\t - \GPpredictmean)/\GPpredictsd$. An alternative representation is, 
\begin{equation*}
    \seqIp(\GPpredictCDF) = 
    \GPpredictsd \mathbb{E} \left[ \left( \widetilde{N} - \max \left( N, \frac{\t-\GPpredictmean}{\GPpredictsd} \right) \right)_+ \right],
\end{equation*}
for $N, \widetilde{N} \overset{i.i.d.}{\sim}\mathcal{N}(0,1)$. For $\intP = \frac{1}{\sigmaY}\phi \left( \frac{u-t}{\sigmaY} \right)\mathrm{d} \lambda(u)$, we get, 
\begin{align*}
    \seqPp(\GPpredictCDF) &= \Phi \left( \begin{pmatrix} 0 \\ 0 \end{pmatrix}; \begin{pmatrix}  \GPpredictmean - \t \\ \t -\GPpredictmean \end{pmatrix},  
    \begin{pmatrix} \sigmaY^2 + \GPpredictvar & -\sigmaY^2 \\ -\sigmaY^2 & \sigmaY^2 + \GPpredictvar \end{pmatrix}  \right)  .
\end{align*}
\end{theorem}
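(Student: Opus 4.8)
The plan is to first establish the master identity $\twcrps(F) = \int F(u)\left(1-F(u)\right)\intB$, and then to evaluate it for each weighting measure by reducing the resulting integral to a probability involving independent Gaussians. For the identity, I would insert the definition~(Eq.~\ref{twcrps_def}) into the expected score~(Eq.~\ref{EtwCRPS}) and exchange the order of integration. Since the integrand $[F(u) - \mathbbm{1}\{y' \leq u\}]^2$ is nonnegative, Tonelli's theorem licenses the swap with no integrability check, giving $\twcrps(F) = \int \big( \int [F(u) - \mathbbm{1}\{y' \leq u\}]^2 \, \mathrm{d}F(y') \big)\intB$. For fixed $u$ the inner integral is the second moment of $F(u) - B$, where $B := \mathbbm{1}\{y' \leq u\}$ is Bernoulli with $\mathbb{P}(B=1) = F(u)$ under $y' \sim F$; the elementary computation $\mathbb{E}[(F(u)-B)^2] = F(u)(1-F(u))$ then yields the stated formula.

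For $\wI$, I would substitute $v = (u-\GPpredictmean)/\GPpredictsd$ to obtain $\seqIp(\GPpredictCDF) = \GPpredictsd \int_{\widetilde{\t}}^{\infty} \Phi(v)(1-\Phi(v))\,\mathrm{d}v$ with $\widetilde{\t} = (\t - \GPpredictmean)/\GPpredictsd$. The alternative representation follows immediately from the reading $\Phi(v)(1-\Phi(v)) = \mathbb{P}(N \leq v < \widetilde{N})$ for independent standard normals $N, \widetilde{N}$: integrating $\mathbbm{1}\{N \leq v < \widetilde{N}\}$ over $v \in [\widetilde{\t}, \infty)$ and taking expectations (Tonelli again) gives the inner length $(\widetilde{N} - \max(N, \widetilde{\t}))_+$, which is exactly the claimed formula after multiplying by $\GPpredictsd$.

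For the explicit closed form I would compute antiderivatives of $\Phi$ and $\Phi^2$. Using $\int \Phi(v)\,\mathrm{d}v = v\Phi(v) + \phi(v)$, and integrating $\int \Phi(v)^2\,\mathrm{d}v$ by parts together with $v\phi(v) = -\phi'(v)$ and the identity $\int \phi(v)^2\,\mathrm{d}v = \tfrac{1}{2\sqrt{\pi}}\Phi(v\sqrt{2})$ (obtained by recognizing $\phi^2$ as a scaled $\mathcal{N}(0,1/2)$ density), one reaches $\int \Phi(v)^2\,\mathrm{d}v = v\Phi(v)^2 + 2\Phi(v)\phi(v) - \tfrac{1}{\sqrt{\pi}}\Phi(v\sqrt{2})$. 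Evaluating the difference over $[\widetilde{\t}, \infty)$, the only subtle point is the boundary term $v\Phi(v)(1-\Phi(v))$, which vanishes as $v \to \infty$ because $1-\Phi(v)$ decays like $\phi(v)/v$; collecting the surviving terms produces the stated expression. This antiderivative bookkeeping is the most error-prone step and is where I expect the main effort to lie.

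For $\wP$, I would again use a probabilistic reading of the master identity: writing $\GPpredictCDF(u) = \mathbb{P}(\xi \leq u)$ and $1-\GPpredictCDF(u) = \mathbb{P}(\xi' > u)$ for independent $\xi, \xi' \sim \mathcal{N}(\GPpredictmean, \GPpredictvar)$, and recognizing the weight as the density of $W \sim \mathcal{N}(\t, \sigmaY^2)$ at $u$, the integral becomes $\seqPp(\GPpredictCDF) = \mathbb{P}(\xi \leq W,\, \xi' > W)$ with $\xi, \xi', W$ mutually independent. Setting $C := \xi - W$ and $D := W - \xi'$ turns this into $\mathbb{P}(C \leq 0, D \leq 0)$, and $(C,D)$ is bivariate Gaussian with means $(\GPpredictmean - \t,\ \t - \GPpredictmean)$, common variance $\sigmaY^2 + \GPpredictvar$, and covariance $\mathrm{Cov}(\xi - W, W - \xi') = -\var(W) = -\sigmaY^2$, since all cross terms between independent variables drop out. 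This is precisely the bivariate normal CDF appearing in the statement, which completes the proof.
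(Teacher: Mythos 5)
Your proposal is correct and follows essentially the same route as the paper's proof: a Tonelli swap reducing the expected score to the Bernoulli variance $F(u)(1-F(u))$ (the generalization of the Gneiting--Raftery CRPS expression that the paper explicitly invokes), substitution plus antiderivative bookkeeping for $\wI$ (your antiderivative of $\Phi^2$ differentiates back correctly and the boundary term $v\Phi(v)(1-\Phi(v))$ vanishes at infinity as you argue, reproducing the stated closed form exactly), the $\mathbb{P}(N \le v < \widetilde{N})$ reading for the alternative representation, and the reduction of the $\wP$ case to $\mathbb{P}(\xi \le W,\ \xi' > W)$ for independent Gaussians with the covariance computation $\mathrm{Cov}(\xi - W, W - \xi') = -\sigmaY^2$, yielding the stated bivariate normal CDF. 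All three formulas check out against the theorem, so no gaps.
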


\begin{proof}
    Supplementary material (Part~C.2); the first result generalizes the corresponding expression for the traditional CRPS \citep{gneiting2007strictly}.
\end{proof}

To use this for pointwise selection criteria (Eq.~\ref{pointwise}), we can set $G_n(\boldsymbol{x}) = \twcrps(\GPpredictCDF)$, where $\GPpredictmean = m_n(\boldsymbol{x})$ and $\GPpredictvar = k_n(\boldsymbol{x, \boldsymbol{x}})$. To better understand the resulting behavior, we return to Figure \ref{fig:illu_twcrps}, in which we also denote the expected score values. Both CRPS and $\seqPp$ maintain symmetry in their expected scores (a, c, d, f), while $\seqIp$ remains asymmetric but with a reversed orientation. In the original scoring rule relying on $y$, a missed point within the excursion set results in a higher score. In the expected score, a point mistakenly classified as being within the excursion set yields a higher expected score. In general, the $\seqIp$ is higher for cases where $\GPpredictCDF$ has $m > \t$, regardless of whether the prediction is correct (since the correct outcome is unknown). Figure \ref{fig:illu_twcrps2}a shows this relationship for $\GPpredictCDF$ with a fixed $\t = 0$. Both the values of $\seqIp$ and $\seqPp$ increase with higher uncertainty ($\GPpredictsd$) for the same $\GPpredictmean$. However, while $\seqIp$ continuously increases with rising $\GPpredictmean$ when $\GPpredictsd$ is fixed, $\seqPp$ remains symmetric around the threshold $\t$ for varying $\GPpredictmean$. \\

\begin{figure}
    \centering
    \includegraphics{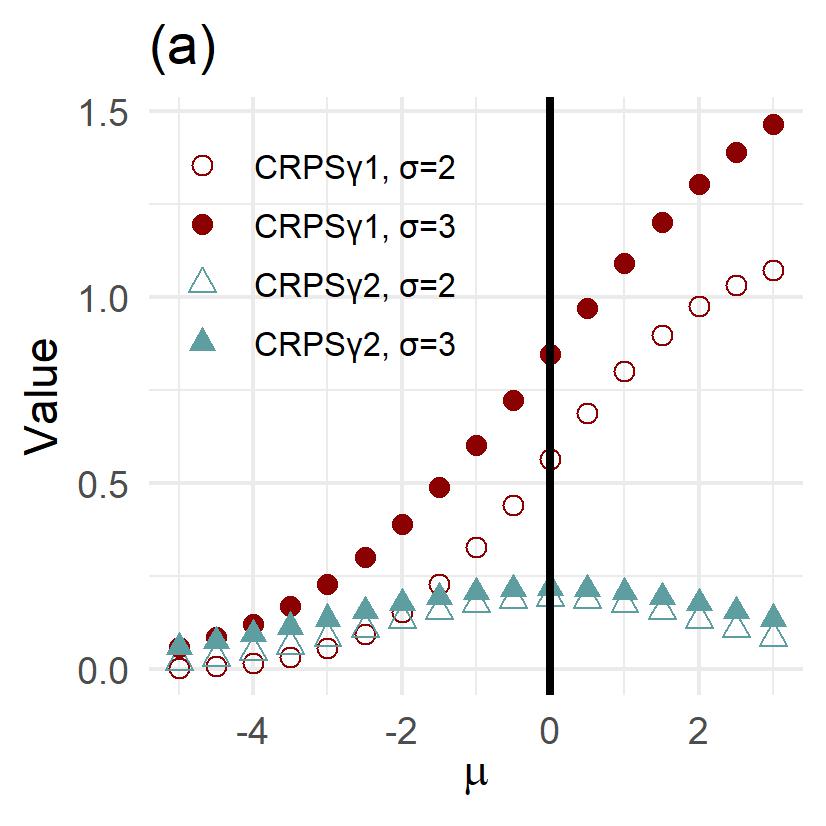} \hspace{.7cm}
    \includegraphics{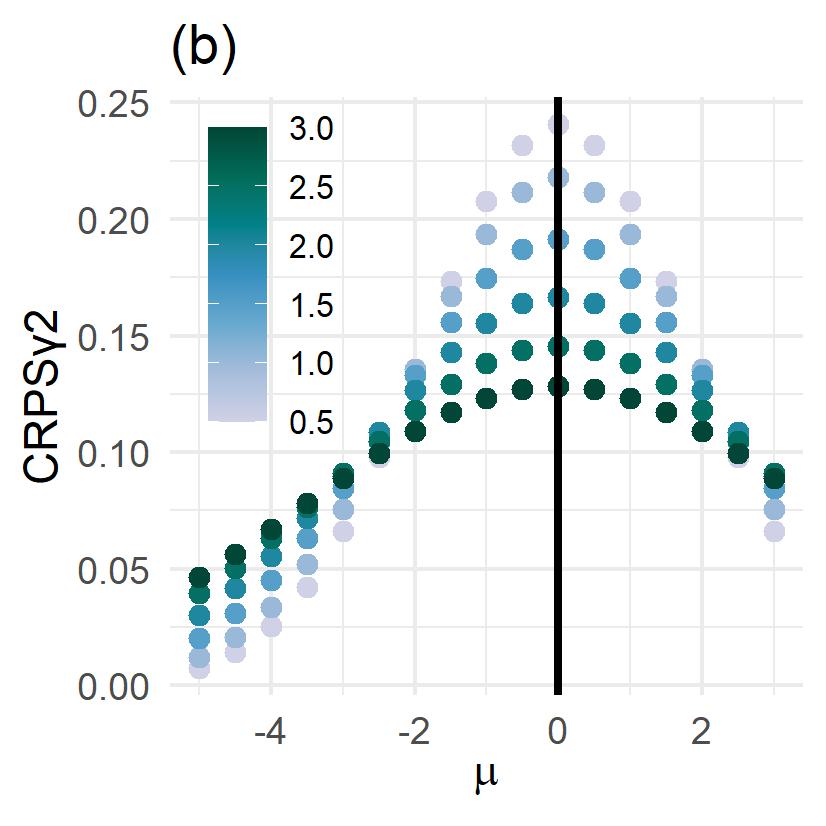} 
    \caption{Illustrations of the expected $\seqIp$ and $\seqPp$ for a Gaussian~$\GPpredictCDF$ depending on the predictive mean $\GPpredictmean$: (a) Expected $\seqIp$ and $\seqPp$ ($\sigmaY = 1.5$) and (b) Expected $\seqPp$ for different values of $\sigmaY$ ($\GPpredictsd = 2$). In both plots, the vertical solid line indicate the threshold $\t=0$. }
    \label{fig:illu_twcrps2}
\end{figure}

While the indicator density of $\seqIp$ is fixed (although a shift would be thinkable), the density of $\seqPp$ is influenced by the parameter $\sigmaY$, which controls the width of the weight around the threshold $\t$. Figure \ref{fig:illu_twcrps2}b illustrates the dependence of $\seqPp$ on the predictive mean $\GPpredictmean$ for different values of $\sigmaY$ with $\t = 0$. A lower value of $\sigmaY$ places greater emphasis on the region near the threshold, while a higher value results in more uniform values across different $\GPpredictmean$.

\subsection{SUR selection criteria}
\label{twCRPS_acquisition_int}

In pointwise selection, we consider the marginal Gaussians, whereas a SUR criterion takes into account the uncertainty of the whole GP. Relying on the expected threshold-weighted CRPS, we propose the following SUR functional (Eq.~\ref{SUR}) and select $\boldsymbol{x_{n+1}}$ to minimize:
\begin{align}
    J_n(\boldsymbol{x}) &= \mathbb{E}_{n, \boldsymbol{x}}[\mathcal{H}(\GPupdate)] \\
    &=  \mathbb{E}_{n, \boldsymbol{x}} \left[ \int \twcrps(\GPupdateU (\boldsymbol{x'}))   \mathbb{P}_{\mathcal{X}}(\mathrm{d}\boldsymbol{x'}) \right] \nonumber \\
    &=  \int \mathbb{E}_{n, \boldsymbol{x}} \left[  \twcrps(\GPupdateU(\boldsymbol{x'}))  \right] \mathbb{P}_{\mathcal{X}}(\mathrm{d}\boldsymbol{x'}). \nonumber
\end{align}
This means we want to select $\boldsymbol{x_{n+1}}$ such that it minimizes the expected integrated future $\twcrps(\GPupdate)$. We continue by deriving expressions for the integrand, keeping $\boldsymbol{x}$ and $\boldsymbol{x'}$ fixed. In doing so, we consider the marginal Gaussian $\GPupdateU(\boldsymbol{x'}) \sim \mathcal{N}(\GPupdateM(\boldsymbol{x'}), \GPupdateK(\boldsymbol{x'}, \boldsymbol{x'}))$. While the updated variance $\GPupdateK(\boldsymbol{x'}, \boldsymbol{x'})$ only depends on $ \boldsymbol{x_{n+1}} = \boldsymbol{x}$, the updated mean $\GPupdateM(\boldsymbol{x'})$ also depends on the noise-contaminated corresponding value $z_{n+1} = f(\boldsymbol{x_{n+1}}) + \epsilon_{n+1}$. When we consider adding $\boldsymbol{x_{n+1}} = \boldsymbol{x}$, we neither can nor want to evaluate all candidate points in advance, so we do not yet know $z_{n+1}$. However, using the expectation $\mathbb{E}_{n,\boldsymbol{x}}$ with respect to $\boldsymbol{\xi_n}$, we can write $Z_{n+1} = m_n(\boldsymbol{x}) + \sqrt{k_n(\boldsymbol{x}, \boldsymbol{x}) + \tau^2} V$ with $V \sim \mathcal{N}(0,1)$ such that
\begin{align}
    \GPupdateM(\boldsymbol{x'}) | V
    = m_n(\boldsymbol{x'}) + \alpha_{n}(\boldsymbol{x'}) V, \quad \text{with }\alpha_{n}(\boldsymbol{x'}) = \frac{k_n(\boldsymbol{x}, \boldsymbol{x'})}{\sqrt{k_n(\boldsymbol{x}, \boldsymbol{x}) + \tau^2}}.
\end{align}
This leads to the following theorem:

\begin{theorem}\label{sur_twcrps}
        We consider the conditional GP $\GPupdate \sim \mathrm{GP}(\GPupdateM, \GPupdateK)$ and its predecessor $\boldsymbol{\xi_n} \sim \mathrm{GP}(m_n, k_n)$. For fixed $\boldsymbol{x}, \boldsymbol{x'} \in \mathcal{X}$ and using $\sigma^2_{n+1} = \GPupdateK(\boldsymbol{x'}, \boldsymbol{x'})$, $\mu_n = m_n(\boldsymbol{x'})$, $\alpha_n = \alpha_{n}(\boldsymbol{x'})$, we obtain for $\wI$: 
        \begin{align*}
        &\mathbb{E}_{n,\boldsymbol{x}} \left[  \seqIp(\GPupdateU(\boldsymbol{x'}))  \right] 
        = \sigma_{n+1} \mathbb{E} \left[ \left( N - \max \left(\widetilde{N}, \frac{\t - \mu_n - \alpha_{n} V}{\sigma_{n+1}} \}   \right) \right)_+ \right],
        \end{align*}
        with $N, \widetilde{N}, V \overset{i.i.d.}{\sim} \mathcal{N}(0,1)$.
       For $\wP$, one derives,
        \begin{align*}
        &\mathbb{E}_{n,\boldsymbol{x}} \left[  \seqPp(\GPupdateU(\boldsymbol{x'}))  \right] &= \Phi \left( \begin{pmatrix} 0 \\ 0 \end{pmatrix}; \begin{pmatrix}  \mu_n - t \\ \t - \mu_n \end{pmatrix},  
            \begin{pmatrix} 
            \sigma_{n+1}^2 + \alpha_{n}^2 + \sigmaY^2 
            & -\alpha_{n}^2 - \sigmaY^2 \\ 
            -\alpha_{n}^2 - \sigmaY^2 
            & \sigma_{n+1}^2 + \alpha_{n}^2 + \sigmaY^2
            \end{pmatrix}  \right)  .
        \end{align*}
\end{theorem}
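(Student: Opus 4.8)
The plan is to leverage the closed-form pointwise expressions already established in Theorem~\ref{pointwise_twcrps}, applied conditionally on the future observation, and then to integrate out that observation's randomness. The structural starting point is that, conditionally on $\GPupdateO$ — equivalently, conditionally on $V$ — the updated field $\GPupdateU(\boldsymbol{x'})$ is marginally Gaussian with mean $\GPupdateM(\boldsymbol{x'})\mid V = \mu_n + \alpha_n V$ and variance $\sigma_{n+1}^2$, the latter being free of $V$ (as noted in the text preceding the theorem). Hence, for each realisation of $V$, both $\seqIp(\GPupdateU(\boldsymbol{x'}))$ and $\seqPp(\GPupdateU(\boldsymbol{x'}))$ are exactly the quantities computed in Theorem~\ref{pointwise_twcrps} under the substitution $\GPpredictmean = \mu_n + \alpha_n V$ and $\GPpredictvar = \sigma_{n+1}^2$. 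Applying $\mathbb{E}_{n,\boldsymbol{x}}$ then reduces to averaging those formulas over $V \sim \mathcal{N}(0,1)$.

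For the indicator weight $\wI$, I would start from the alternative representation $\seqIp(\GPpredictCDF) = \GPpredictsd\,\mathbb{E}[(\widetilde N - \max(N, (\t-\GPpredictmean)/\GPpredictsd))_+]$ of Theorem~\ref{pointwise_twcrps}. The substitution $\GPpredictmean = \mu_n + \alpha_n V$, $\GPpredictsd = \sigma_{n+1}$ turns the threshold inside the maximum into $(\t - \mu_n - \alpha_n V)/\sigma_{n+1}$, the inner expectation over $N, \widetilde N \sim \mathcal{N}(0,1)$ being taken independently of $V$. Averaging over $V$ and invoking Fubini--Tonelli (the integrand being nonnegative and $N, \widetilde N, V$ independent) collapses the nested expectations into a single expectation over the triple, which is precisely the claimed expression; the apparent interchange of $N$ and $\widetilde N$ relative to Theorem~\ref{pointwise_twcrps} is a harmless relabelling of two i.i.d.\ standard normals.

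For the Gaussian weight $\wP$, I would rewrite the pointwise formula as a bivariate orthant probability $\seqPp(\GPpredictCDF) = \mathbb{P}(W_1 \le 0, W_2 \le 0)$, where conditionally on $V$ the pair $(W_1, W_2)$ is bivariate Gaussian with mean $(\mu_n - \t + \alpha_n V,\; \t - \mu_n - \alpha_n V)$ and the $V$-free covariance matrix having diagonal $\sigmaY^2 + \sigma_{n+1}^2$ and off-diagonal $-\sigmaY^2$. The crucial observation is that $V$ enters only through a linear shift of the mean, by $+\alpha_n V$ in the first coordinate and $-\alpha_n V$ in the second. Writing $(W_1,W_2)$ as this linear-in-$V$ mean plus independent Gaussian noise and integrating out $V \sim \mathcal{N}(0,1)$, the mixture is again bivariate Gaussian: the mean collapses to $(\mu_n - \t, \t - \mu_n)$ since $\mathbb{E}[V]=0$, each variance gains $\alpha_n^2\,\var(V) = \alpha_n^2$, and the covariance gains $-\alpha_n^2$. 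Identifying the resulting orthant probability with the bivariate CDF at the origin yields the stated covariance matrix with diagonal $\sigma_{n+1}^2 + \alpha_n^2 + \sigmaY^2$ and off-diagonal $-\alpha_n^2 - \sigmaY^2$.

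The computations are routine once the conditioning is correctly set up, so the only genuine subtlety — and the step I would treat most carefully — is the covariance bookkeeping when marginalising out $V$ in the Gaussian-weight case, in particular tracking how the opposite-signed shifts $\pm\alpha_n V$ produce the $-\alpha_n^2$ contribution to the off-diagonal term rather than a $+\alpha_n^2$ one. I would also explicitly justify interchanging the pointwise expected score with the expectation over $V$, which follows from Tonelli's theorem given the nonnegativity of the expected CRPS as an entropy functional.
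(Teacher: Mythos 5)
Your proposal is correct and takes essentially the same route as the paper's proof: conditioning on $V$, the paper likewise substitutes $\GPpredictmean = \mu_n + \alpha_n V$ and $\GPpredictvar = \sigma_{n+1}^2$ into the pointwise formulas of Theorem~\ref{pointwise_twcrps} and then averages over $V \sim \mathcal{N}(0,1)$ --- using Tonelli and the harmless relabelling of the i.i.d.\ pair $N, \widetilde N$ for $\wI$, and recognizing the Gaussian location mixture as bivariate Gaussian with mean $(\mu_n - \t,\, \t - \mu_n)$, diagonal entries augmented by $\alpha_n^2$ and off-diagonal by $-\alpha_n^2$ for $\wP$. Your covariance bookkeeping for the opposite-signed shifts $\pm\alpha_n V$ matches the paper's computation exactly, and there are no gaps.
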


\begin{proof}
    Supplementary material (Part~C.3).
\end{proof}

Using these formulas for the integrand, the SUR criteria $J_n(\boldsymbol{x})$ can be calculated by integrating over $\boldsymbol{x'} \in \mathcal{X}$. In a discrete molecular space, this corresponds to taking the mean of the molecules. We refer to the SUR criteria, which are based on integration, with $\seqIs$ and $\seqPs$. \\

To ensure the theoretical soundness of these SUR criteria, we establish in Part~D of the supplementary material that $\seqIs$ and $\seqPs$ are consistent in the sense of \citet{bect2019supermartingale}. Specifically, we show that the corresponding uncertainty measures $\seqIp$ and $\seqPp$ converge almost surely to zero as the number of observations increases, thereby ensuring the asymptotic correctness of the sequential design.

\section{Test cases, implementation and evaluation}
\label{Sec4}

% In this section, we first introduce the test cases (Section \ref{test}) and the implementation and configuration choices (Section \ref{implementation}). Then, we give an overview of the used evaluation metrics (Section \ref{evaluation_metrics}) and discuss the alternatively tested sequential design strategies (Section~\ref{alternatives_acquisition}). 

\subsection{Test cases}
\label{test}

Both of our test cases revolve around the Photoswitch dataset \citep{thawani2020photoswitch}. Photoswitches represent a category of molecules capable of undergoing a reversible transformation between various structural states when exposed to light. Their versatility extends to applications in fields such as medicine and renewable energy, where their effectiveness hinges on their electronic transition wavelength. Our dataset comprises 392 molecules, each featuring the experimentally determined transition wavelength of its E isomer. We represent the molecules by Morgan fingerprints with a length of 2048 bits and a radius of~3. \\
% Also mention that what is labeled as "index" is the rank of the molecule when ordering the responses and is used for visualisation purpose, but is not actually available for the application?

The original Photoswitch dataset only provides noisy observations, meaning that the ground truth of the function $f$ and the excursion set $\Gamma$ are unavailable. Before using this 'real' test case, we also construct a synthetic version to provide a more controlled benchmark for the new acquisition criteria. To generate the synthetic version of the dataset, we fit a GP using all the molecules. Then, we assume that the resulting $m_N$ represents the ground truth $f$. To generate the noisy observations $Z_i$, we add i.i.d. Gaussian noise, with the variance determined by the GP fit. We emphasize that the synthetic version of the dataset ensures that the data corresponds to the GP model, which may not necessarily be the case in practice. 

\begin{comment}
\begin{figure}[h]
    \centering
    \includegraphics[]{Photo_illu.jpeg}
    \caption{Datasets based on Photoswitch. The original experimentally determined measurements are depicted as red dots, the synthetic $f = m_N$ as black dots and the synthetic noisy data as blue crosses. The molecules are plotted in increasing order with respect to the synthetic $f = m_N$. }
    \label{fig:photo}
\end{figure}

(black dots in Fig. \ref{fig:photo})
\end{comment}

\subsection{Tanimoto Kernel}
\label{tanimotokernel}

% Given the binary molecular fingerprints used in our test cases, selecting a suitable family of positive definite kernels $k(\boldsymbol{x},\boldsymbol{x'}), \boldsymbol{x}, \boldsymbol{x'} \in \mathcal{X}$ is crucial for effective GP modeling. Traditional kernels, typically designed for continuous inputs, may not be well suited for these high-dimensional binary representations. To address this, we explore specialized kernels tailored for molecular fingerprints, with a particular focus on the Tanimoto kernel.\\

Following \cite{thawani2020photoswitch}, we use fingerprints as input representations. Given the nature of these high-dimensional binary inputs, we not only consider popular traditional kernels (as presented in the supplementary material), but also kernels that are specialized for such non-continuous input spaces. For a comprehensive comparison of kernels, including alternative molecular representations, we refer to \citet{griffiths2024gauche}.\\

The Tanimoto kernel, originally introduced by \citet{gower1971general} as a similarity measure for binary attributes, is well suited to our setting.  For $\boldsymbol{x},\boldsymbol{x'} \in \{0,1\}^d$, $d \geq 1$ it can be defined as,
\begin{equation}
\label{tanimoto}
    k_{\mathrm{TAN}}(\boldsymbol{x}, \boldsymbol{x'}) := \left\{ \begin{array}{ll} \sigma_{k}^2 & \text{ if } \langle \boldsymbol{x},  \boldsymbol{x}\rangle = \langle \boldsymbol{x'},  \boldsymbol{x'} \rangle = 0, \\ \sigma_{k}^2 \frac{\langle \boldsymbol{x},  \boldsymbol{x'} \rangle}{||\boldsymbol{x}||^2 + ||\boldsymbol{x'}||^2 - \langle \boldsymbol{x},  \boldsymbol{x'} \rangle } & \text{ else, } \end{array} \right.
\end{equation}
where $\langle \boldsymbol{x},  \boldsymbol{x}\rangle = \boldsymbol{x}^\top \boldsymbol{x}$ and $\sigma_{k}^2 > 0$ is a variance parameter. If the binary elements of the vectors stand for the presence and absence of features or properties of an object, the Tanimoto kernel is counting the number of common features normalized by the total number of features present. It is also known as Jaccard's index \citep{jaccard1901etude} in the literature.
% While in the main results of this study (Section xx), we focus exclusively on the Tanimoto kernel, we have also tested covariance kernels for Euclidean input spaces (Appendix xx for the Photoswitch dataset). By testing a isotropic exponential Gaussian kernel, we observed that ... A more comprehensive comparison of kernels, including alternative molecular representations, can be found in \citet{griffiths2024gauche}.

\subsection{Implementation and configuration}
\label{implementation}
The code for this study can be found on \href{https://github.com/LeaFrie/GP_sequentialCRPS}{GitHub}, and the original Photoswitch dataset is publicly available (see e.g., \href{ https://github.com/Ryan-Rhys/The-Photoswitch-Dataset/tree/master/dataset}{Photoswitch dataset}).
For implementing the GP models, we utilized the R-package \textit{kergp} \citep{kergp}. This is a package for GP regression where some pre-defined kernels are available, and it is also possible to define customized kernels through a formula mechanism. The latter was employed in the case of the Tanimoto kernel. We consider the variance term of the kernel (such as the integral scale if there is one) as hyperparameters, estimated simultaneously using maximum likelihood estimation with numerical optimization via gradient based methods. In the test case concerning the original data with the noise being unknown, we consider its variance as an additional hyperparameter. For the sequential data acquisition, we implemented the different criteria manually. For the integral criteria, we calculate the integrals by averaging over the candidate set. Furthermore, for $\seqIs$, we employ Monte Carlo sampling of the standard-normally distributed random variables. \\

The synthetic test case is constructed using a Tanimoto kernel, so we naturally also use it in the later numerical experiments. For the choice of kernel used on the original dataset, we compared the performance of the Tanimoto, Gaussian and Exponential kernels using $10,20,30\%$ of the datasets for training (supplementary material, Part~B). Given the consistently good performance of the Tanimoto kernel, even with small training datasets, we continue focusing on this kernel, also for the original dataset. For the threshold $\t$ specifying the excursion set, we employ the empirical 0.8-quantile of the distribution of the numerical response of interest to ensure that we have a reasonable number of molecules with the desired property in the dataset.

\subsection{Performance evaluation}
\label{evaluation_metrics}

To prevent a scenario where all excursion set molecules are sequentially added to the training set, leaving no examples in the test set, we first reserve a subset of molecules for validation in each trial. This leads to a split of the data into three sets: the initial training set with points $\boldsymbol{x_1}, \dots, \boldsymbol{x_n}$, the candidate training set $\boldsymbol{x_{n+1}}, \dots, \boldsymbol{x_m}$ and the validation set with points $\boldsymbol{x_{m+1}}, \dots, \boldsymbol{x_N}$. We specify the number of validation points as $M=N-m = 100$ in both test cases. Eventually, the predictive performance of a model is evaluated on the validation set by considering the mean metric of the validation points. \\

% ground truth known
In the test case which we constructed such that the ground truth is known, we consider the following metrics: Naturally, we use the CRPS-based evaluation metrics (Section \ref{Sec3}), wherein we compare the latent GP $\GPpredictCDF$ with mean $\GPpredictmean = m_n(\boldsymbol{x})$ and variance $\GPpredictvar = k_n(\boldsymbol{x, \boldsymbol{x}})$ with the ground truth $y = f(\boldsymbol{x})$. To avoid confusion, we denote the evaluation metrics as $\evaB$, $\evaI$ and $\evaP$, indicating that we consider the mean value over the validation molecules. Furthermore, we consider the precision and sensitivity of the model given by,
\begin{equation}
\label{rates}
    \mathrm{Sensitivity} = \frac{\mathrm{TP}}{\mathrm{TP} + \mathrm{FN}}, \quad 
    \mathrm{Precision} = \frac{\mathrm{TP}}{\mathrm{TP} + \mathrm{FP}},
\end{equation}
where we use the numbers of true positives (TP; $| \hat{\Gamma} \cap \Gamma |$), false positives (FP; $|\hat{\Gamma} \cap \Gamma^c|$), true negatives (TN; $|\hat{\Gamma}^c \cap \Gamma^c|$) and false negatives (FN; $|\hat{\Gamma}^c \cap \Gamma|$), with $|A|$ denoting the cardinality of set $A$ and $A^c$ representing its complement. Additionally, we consider the Root Mean Squared Error (RMSE), which should not be confused with the acquisition criterion based on the Mean Squared Error (kriging variance). Besides the basic version, we also employ two versions targeting the excursion scenario. The first version exclusively considers molecules actually lying in the excursion set,
\begin{align}
\label{RMSE}
    &\mathrm{RMSE_{\Gamma}} = \sqrt{\frac{1}{M_{\Gamma}}\sum_{i=m+1}^N(m_n(\boldsymbol{x_i}) - f(\boldsymbol{x_i}))^2 \, \mathbbm{1} \{\boldsymbol{x_i} \in \Gamma \} },
\end{align}
with $M_{\Gamma} = \sum_{i=m+1}^N \mathbbm{1} \{\boldsymbol{x_i} \in \Gamma \}$. The second version only accounts for molecules, which are classified by the GP to be in the excursion set,
\begin{align}
    &\mathrm{RMSE_{p}} = \sqrt{\frac{1}{M_p} \sum_{i=m+1}^N(m_n(\boldsymbol{x_i}) - f(\boldsymbol{x_i}))^2 \,  \mathbbm{1} \{\boldsymbol{x_i} \in \hat{\Gamma} \}},
\end{align}
where $M_{p} = \sum_{i=m+1}^N \mathbbm{1} \{\boldsymbol{x_i} \in \hat{\Gamma} \}$. Please note that $\mathrm{RMSE_{p}}$ equals zero for a model with zero sensitivity and, therefore, should only be used in conjunction with a sensitivity measure. While $\mathrm{RMSE_{\Gamma}}$ represents a version of RMSE linked to sensitivity and the performance on the actual excursion set, $\mathrm{RMSE_{p}}$ focuses on precision by evaluating the model's performance specifically for molecules classified by the GP to be within the excursion set. \\

In the test case using the original dataset, the ground truth for $f$ and $\Gamma$ is unknown, making classification less straightforward. Consequently, we focus exclusively on the CRPS-based evaluation metrics outlined in Section \ref{Sec3}. The GP $\boldsymbol{\xi_n}$ acts as a predictor for the (noiseless) function $f$. To evaluate it in the presence of noisy data, we employ the observational GP model, where the covariance function is extended to $k_n + \tau^2$.

\subsection{Alternative selection criteria}
\label{alternatives_acquisition}
The Targeted Mean Square Error (TMSE) criterion, introduced by \citet{picheny2010adaptive}, is a pointwise selection criterion and aims to minimize the kriging variance (Mean Square Error), particularly in regions where $m_n$ closely approximates the threshold $\t$. It relies on a Gaussian PDF around the threshold and aims to maximize,
\begin{equation}
    \label{tmse}
    G_n(\boldsymbol{x}) = k_n(\boldsymbol{x},\boldsymbol{x}) \frac{1}{\sqrt{k_n(\boldsymbol{x},\boldsymbol{x}) + \zeta^2}}\phi \left( \frac{m_n(\boldsymbol{x}) - \t}{\sqrt{k_n(\boldsymbol{x},\boldsymbol{x}) + \zeta^2}} \right)
    % \frac{1}{\sqrt{2 \pi (k_n(\boldsymbol{x},\boldsymbol{x}) + \zeta^2)}} \exp \left( - \frac{1}{2} \left( \frac{m_n(\boldsymbol{x}) - \t}{\sqrt{k_n(\boldsymbol{x},\boldsymbol{x}) + \zeta^2}} \right)^2 \right),
\end{equation}
with $\zeta \geq 0$ being a parameter tuning the bandwidth of a window of interest around the threshold~$\t$ (here we use $\zeta=0$). As a second pointwise criterion, we consider the entropy-based criterion discussed in \citet{cole2023entropy}. It takes into account the excursion probability $p_n(\boldsymbol{x})$ and focuses on,
\begin{equation}
    \label{entropy}
    G_n(\boldsymbol{x}) = - p_n(\boldsymbol{x}) \log (p_n(\boldsymbol{x})) - (1-p_n(\boldsymbol{x})) \log (1-p_n(\boldsymbol{x})).
\end{equation}

For classification problems, margin-based active learning approaches rely on the distance of inputs to the decision boundary \citep{tong2001support, brinker2003incorporating, balcan2007margin, ducoffe2018adversarial}. To refine the current model, these methods query unlabeled inputs that lie close to the decision boundary, as they are expected to provide the most information for improving the classifier. In the case of binary classification, the entropy-based acquisition criterion can be viewed as related to margin-based active learning approaches, with the exceedance probability reflecting the closeness of an input to the decision boundary. \\
% We emphasize that the ECL criterion is equivalent to the expectation of another scoring rule, the log score (ref, classification?), derived analogously to how we obtained the $\mathbb{E}\mathrm{twCRPS}$. \\

As a first alternative SUR criterion, we consider the Targeted Integrated Mean Square Error (TIMSE; e.g., \citeauthor{picheny2010adaptive}, \citeyear{picheny2010adaptive}), which can be considered as the integral version of TMSE (Eq.~\ref{tmse}). The TIMSE for point $\boldsymbol{x}$ aims to minimize,
\begin{equation}
\label{timse}
    J_n(\boldsymbol{x}) =  \int_{\mathcal{X}} \GPupdateK(\boldsymbol{x'},\boldsymbol{x'})W_n(\boldsymbol{x'}) \mathbb{P}_{\mathcal{X}}(\mathrm{d}\boldsymbol{x'}),
\end{equation}
with $\GPupdateK(\boldsymbol{x'},\boldsymbol{x'})$ denoting the kriging variance at point $\boldsymbol{x'}$ once point $\boldsymbol{x_{n+1}}=\boldsymbol{x}$ has been added. Furthermore, $W_n(\boldsymbol{x'})$ is a weight function in accordance with the definition of the TMSE (Eq.~\ref{tmse}, we use $\zeta=0$). The second SUR criterion under consideration is again guided by the principle that our objective is to achieve clear classification for each point $\boldsymbol{x} \in \mathcal{X}$, with $p_n(\boldsymbol{x})$ taking values of either 0 or 1. Hence, we adopt the approach proposed by \citet{bect2012sequential}, referred to as Integrated Bernoulli Variance (IBV; \citeauthor{bect2019supermartingale}, \citeyear{bect2019supermartingale}). This criterion is looks for an $\boldsymbol{x} \in \mathcal{X}$, which minimizes:
\begin{equation}
\label{ibv}
    J_n(\boldsymbol{x}) = 
    \mathbb{E}_{n, \boldsymbol{x}} \left[ \int_{\mathcal{X}} p_{n+1}(\boldsymbol{x'})(1-p_{n+1}(\boldsymbol{x'})) \mathbb{P}_{\mathcal{X}}(\mathrm{d}x') \right],
\end{equation}
with $p_{n+1}(\boldsymbol{x'})$ denoting the excursion probability (Eq.~\ref{exprob}) when $\boldsymbol{x_{n+1}}=\boldsymbol{x}$ has been added. Similarly, as for the threshold-weighted CRPS (Section \ref{twCRPS_acquisition_int}), this functional depends on the value in location $\boldsymbol{x}$. Fortunately, for a predictive GP, the IBV criterion can also be simplified with a closed-form expression for the integrand, see \citet{chevalier2014fast}. 
% The Integrand of the IBV is the Bernouilli variance, which is also the expectation of the Brier Score \citep{brier1950verification, murphy1973new}. Consequently, in the realm of Scoring Rules, this criterion can be interpreted as the integrated expected Brier Score.

\section{Results}
\label{Sec5}
In this section, we evaluate the performance of the new acquisition strategies and compare it against state-of-the-art methods and a random approach, where the additional training molecules are selected without an acquisition function. To make reading easier amidst the collection of threshold-weighted CRPS terms, we recall that $\evaI$ and $\evaP$ are used to denote the evaluation metrics (Section \ref{twcrps_section}), $\seqIp$ and $\seqPp$ refer to the pointwise acquisition criteria (Section \ref{twCRPS_acquisition_pw}) and $\seqIs$ and $\seqPs$ to the SUR criteria (Section~\ref{twCRPS_acquisition_int}). 

\subsection{Synthetic dataset}

We start with the synthetic test case. Figure \ref{fig:seq_photoS} shows the evolution of the evaluation metrics with $n=30$ initial molecules and adding $\widetilde{n}=50$ more molecules. We depict the mean over 100 repetitions, which are all started with another random initial training set and based on a different validation set. We observe that on the excursion set–specific evaluation metrics (a-f), the acquisition criteria consistently outperform the random approach (with few exceptions). In terms of (a) $\evaI$, both corresponding $\seqIp$ and $\seqIs$ and perform equally well. For (d) $\evaP$ ($\sigmaY=33$), the SUR criterion $\seqPs$ performs noticeably better that pointwise $\seqPp$. Furthermore, after 20 steps, the pointwise criterion $\seqIp$ performs best with respect to (b) $\mathrm{RMSE}_p$ and (e) precision, but worst for (f) sensitivity and (d) $\evaP$. Regarding (e) $\mathrm{RMSE}_{\Gamma}$, $\seqPs$ performs best, while for (f) sensitivity, $\seqPp$ and $\seqPs$ achieve the highest values. Finally, $\seqPs$ performs best with respect to both `basic' (g) $\evaB$ and (h) RMSE. \\

\begin{figure}
\centering\includegraphics[width=\linewidth]{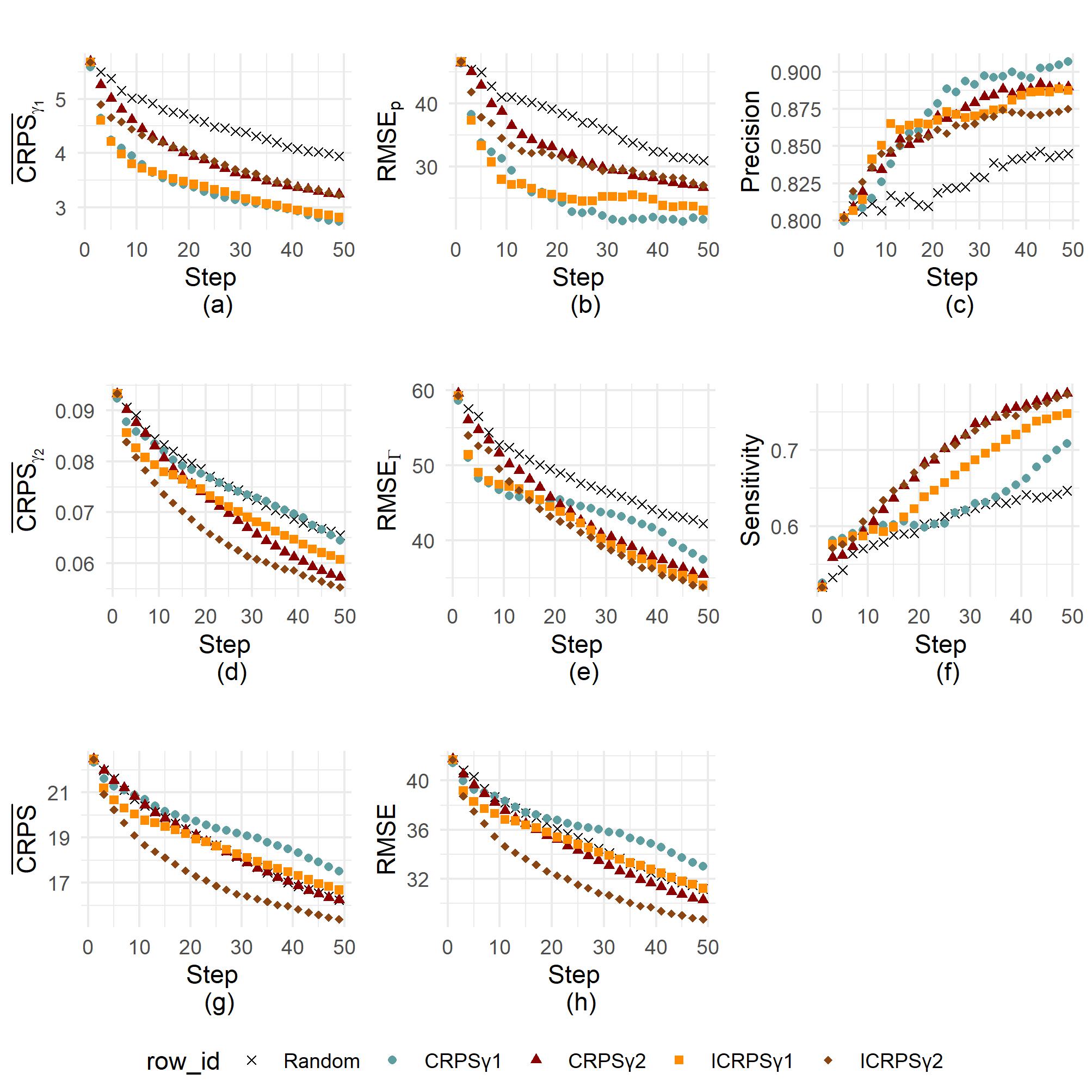} 
  \caption{The sequential performance on the synthetic dataset with $M=100$ validation molecules, $n=30$ and $\widetilde{n} = 50$; the mean over 100 repetitions is depicted and for better visual clarity, only every second step is shown. }
  \label{fig:seq_photoS}
\end{figure}

To better understand the process, Figure \ref{fig:syn_illu} shows an example of an (a) initial model and the evolving performance for $\widetilde{n}=25$ using (b) $\seqIp$ (c) and $\seqPp$. The filled triangles represent the original (noisy) training set, the crosses indicate the predicted mean for the $M=100$ validation molecules, the filled boxes indicate the added molecules, and the dots in the background show the ground truth $f$. It can be clearly seen that while $\seqIp$ excels in accurately predicting the responses of the TP, it is less sensitive and still has more FN compared to $\seqPp$. We observe that $\seqIp$ primarily adds molecules from the excursion set, while $\seqPp$ focuses on molecules near the threshold, as expected based on the two weighting measures.\\

\begin{figure}[p]
  \centering\includegraphics[width=\linewidth]{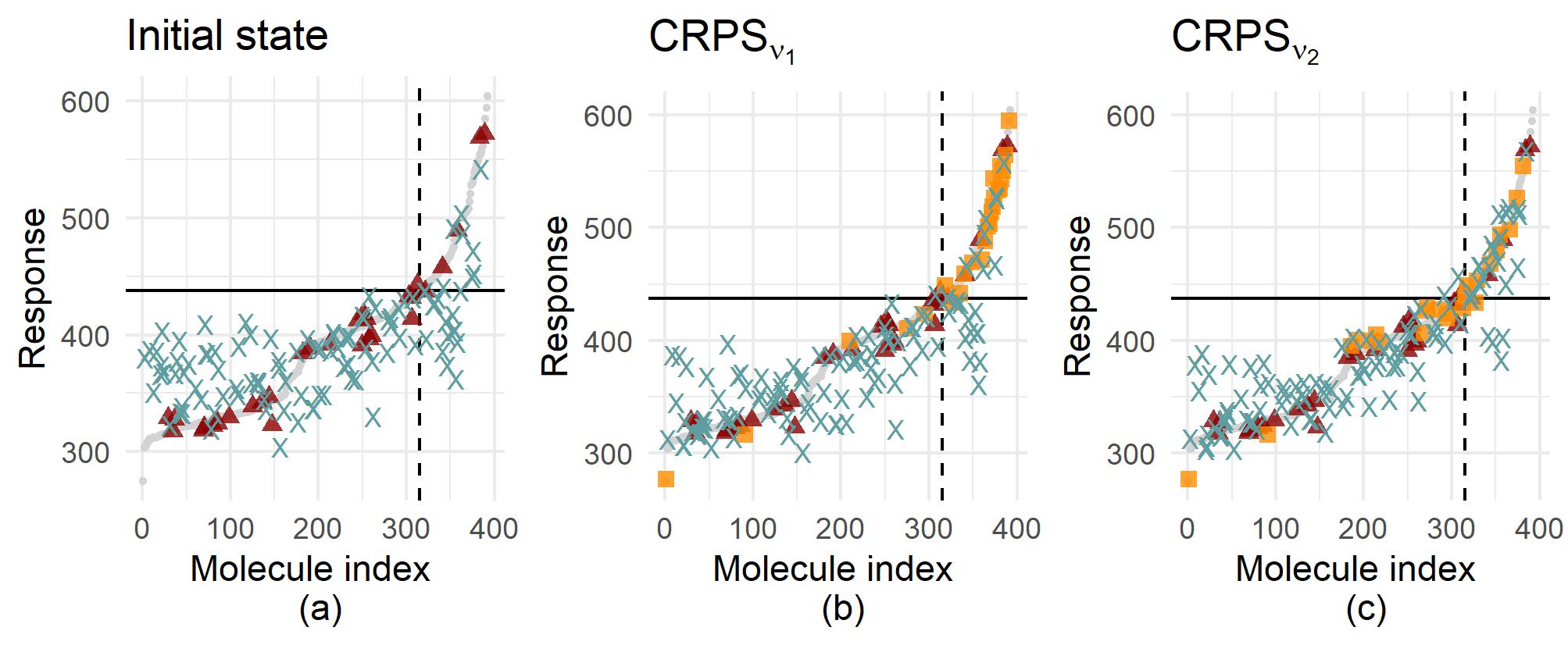}
  \caption{One illustrative example with the synthetic dataset, starting from (a) $n=30$ and adding $\widetilde{n}=25$ molecules with (b) $\seqIp$ and (c) $\seqPp$. The filled triangles show the original (noisy) training set, the crosses the predictive mean for the $M=100$ validation molecules, the filled boxes the added molecules and the dots in the back ground the ground truth $f$. }
  \label{fig:syn_illu}
\end{figure}

Figure \ref{fig:seq_photo_sigS} shows the performance of $\seqPp$ and $\seqPs$ as a function of $\sigmaY$. For both criteria, the traditional CRPS in Figure \ref{fig:seq_photo_sigS}(a) decreases as expected with increasing $\sigmaY$ (because then the criteria focus more and more on the whole domain, as the CRPS does). Both criteria show a clear deterioration in (c) precision with increasing $\sigmaY$. For (b) sensitivity, there is a slight improvement for the pointwise $\seqPp$ with increasing $\sigmaY$, but no systematic effect for $\seqPs$. In all previous and following results we use $\sigmaY=33$, which is half the standard deviation of the responses of the whole dataset. \\

\begin{figure}[p]
\centering
\includegraphics[width=\linewidth]{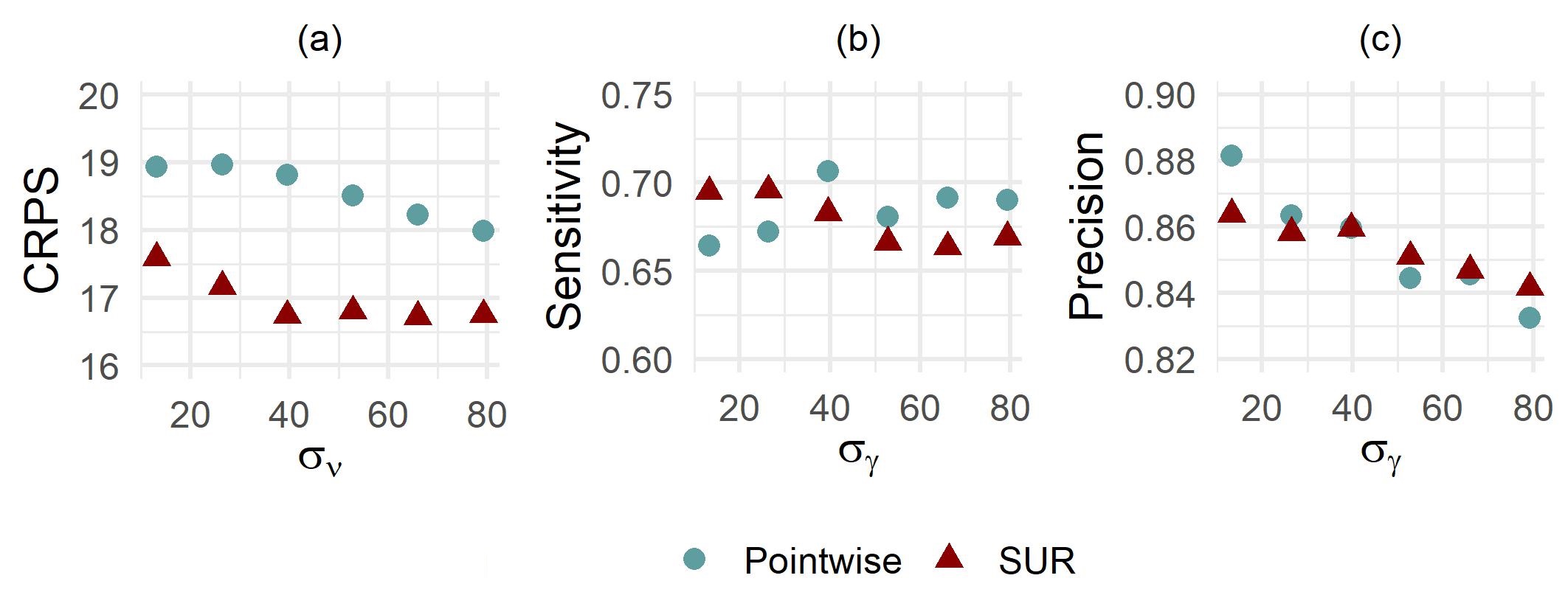} 
  \caption{Performance of $\seqPp$ and $\seqPs$ using different $\sigmaY$ for the synthetic dataset; $M=100$ validation molecules, mean over 50 repetitions, $n=30$ and $\widetilde{n} = 25$. }
  \label{fig:seq_photo_sigS}
\end{figure}

In Figure \ref{fig:syn_comp}, we compare the performance of the new acquisition criteria against the state-of-the-art methods. The box and whiskers plots display the distribution of the metrics obtained in 100 repetitions, highlighting the range, median and quartiles. Also in comparison with the alternative methods, the pointwise $\seqIp$ performs good with respect to the (a) $\evaI$ (reduction of the median compared to the random strategy by about 30~\%), (b) $\mathrm{RMSE}_p$ (minus 40 \% compared to random) and (c) precision (10 \% higher than random). However, $\seqIs$ leads to a similar performance with respect to the median in $\evaI$ and the entropy criteria performs slightly better for precision. For (d) $\evaP$, the good performance of $\seqPs$ is comparable to that of TIMSE, which makes sense since TIMSE also relies on a Gaussian weight around the threshold. Both reduce the median value of the random strategy by about 15 \%. Regarding (e) $\mathrm{RMSE}_{\Gamma}$, the best median value is observed for TMSE (minus 16 \%), closely followed by $\seqPp$, $\seqPs$ and TIMSE. The median of the (f) sensitivity is the highest for $\seqPp$ and $\seqPs$ (15 \% higher than for random), followed by TIMSE and IBV. For reasons of completeness, we also show the results in `basic' (g) $\evaB$ and (h) RMSE. For both metrics, the medians of $\seqPs$ and TIMSE are the lowest, reducing the value of the random strategy by 10 \%. \\

\begin{figure}[p]
  \centering
  \includegraphics[width=\linewidth]{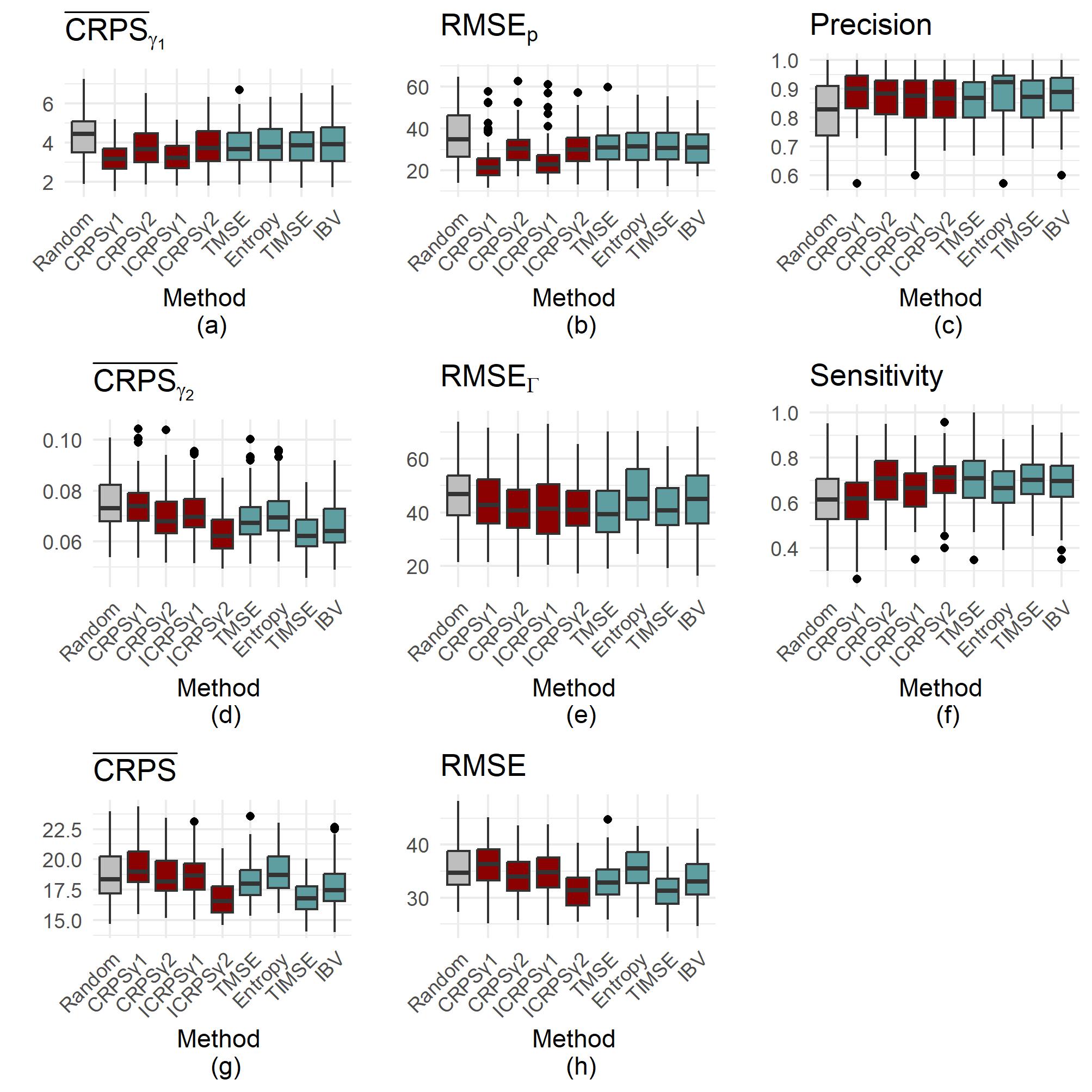}
  \caption{Performance comparison between $\seqIp$, $\seqPp$, $\seqIs$ and $\seqPs$ against alternative selection criteria using the synthetic dataset and $M=100$ validation molecules (100 repetitions, $n=30$ and $\widetilde{n} = 25$). The box and whiskers plots depict the median, quartiles, two whiskers and all 'outlying' points. The whiskers extend from the quartiles to the smallest / highest value at most 1.5 $\times$ the interquartile range away from the quartile, beyond which points are considered outliers. }
  \label{fig:syn_comp}
\end{figure}

\subsection{Original dataset} 

For the original dataset, we only consider the box and whiskers plot for comparison (Fig.~\ref{fig:real_comp}), with focusing solely on (a) $\evaI$, (b) $\evaP$ and (c) $\evaB$ (as the ground truth is not available, see Section \ref{evaluation_metrics}). The pointwise $\seqIp$ also performs best for $\evaI$ in the original dataset and reduces the median value of the random strategy by 30 \%. For $\evaP$, as in the synthetic case, $\seqPs$ and TIMSE lead to similar results, reducing the value of the random strategy by 15 \%. As in the synthetic test case, the lowest value for $\evaB$ is also obtained by $\seqPs$ and TIMSE. \\

\begin{figure}[h]
  \centering
  \includegraphics{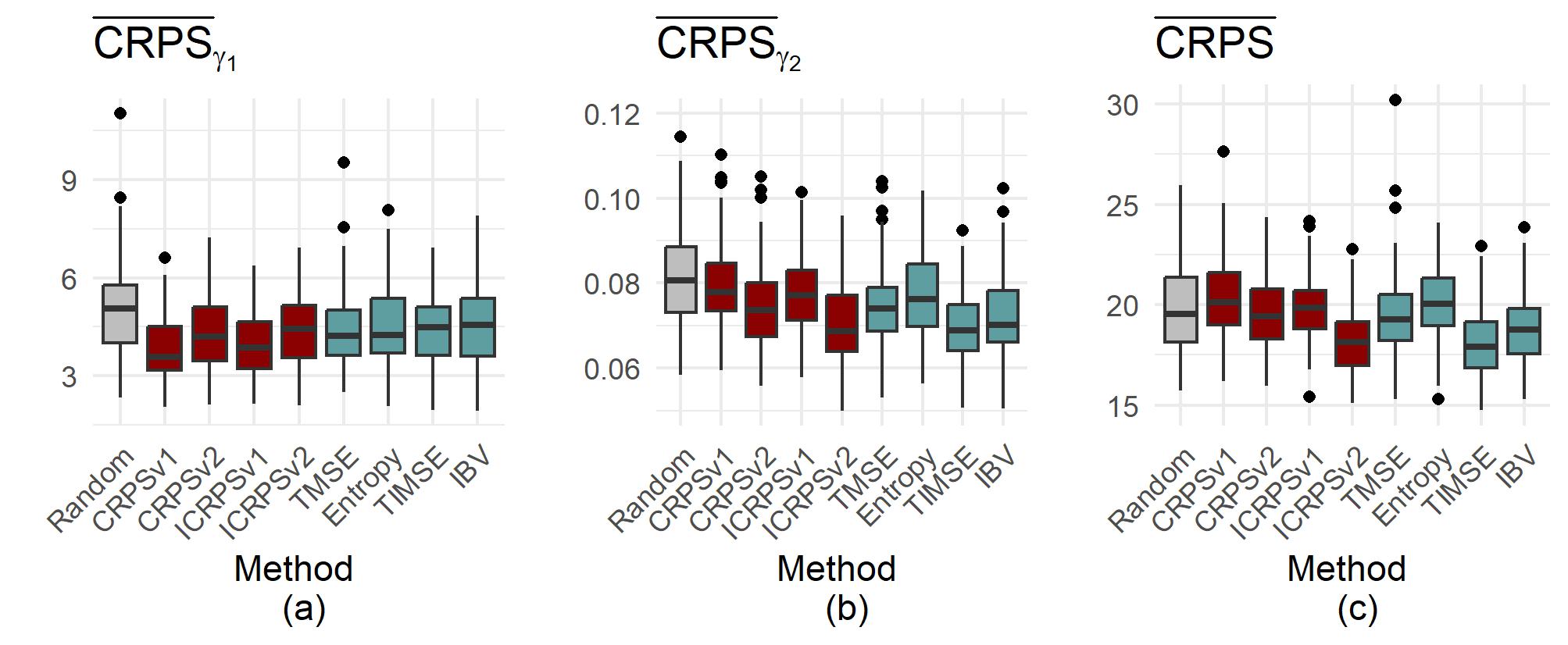}
  \caption{Comparison of the data acquisition methods using the original dataset and $M=100$ validation molecules, 100 repetitions, $n=30$ and $\widetilde{n} = 25$. The box and whiskers plots depict the median, quartiles, two whiskers and all 'outlying' points. The whiskers extend from the quartiles to the smallest / highest value at most 1.5 $\times$ the interquartile range away from the quartile, beyond which points are considered outliers. }
  \label{fig:real_comp}
\end{figure}

\begin{comment}
    
\begin{table}[h]
    \centering
    \begin{tabular}{lllllllllll}
         & Random & $\seqIp$ & $\seqPp$ & $\seqIs$ & $\seqPs$ & TMSE & Entropy & IBV & TIMSE \\
        \hline
        CRPS & 20.84 & 21.16 & 19.95 & 20.26 & 18.35 & 19.8 & 20.63 & 18.91 & 18.15 \\ 
        twCRPS1 & 5.51 & 3.98 & 4.39 & 4.25 & 4.58 & 4.54 & 4.84 & 4.78 & 4.55 \\ 
        twCRPS2 & 0.079 & 0.078 & 0.073 & 0.075 & 0.067 & 0.073 & 0.076 & 0.069 & 0.066 \\
    \end{tabular}
    \caption{30 repetitions using $M=100$ validation molecules for Photoswitch and starting with $n=30, \widetilde{n}=25$.}
    \label{tab:photoswitch}
\end{table}
\end{comment}

\section{Discussion and Conclusions}
\label{Sec6}

In this study, we use a Gaussian Process (GP) approach to model a target function, with a particular focus on an excursion set. We employ targeted sequential design strategies to efficiently select and extend the training dataset, addressing challenges in scenarios where data collection is resource intensive or costly. We introduce novel pointwise and SUR selection criteria based on the threshold-weighted CRPS, using the expected score values. The new criteria show promising performance in both a synthetic and real molecular test case, outperforming or competing with state-of-the-art methods (Fig. \ref{fig:syn_comp} and \ref{fig:real_comp}).  \\

The test case using the synthetic version of the Photoswitch dataset highlights the following strengths and weaknesses of the new selection criteria: In terms of classification, $\seqIp$ is the most precise, while $\seqPp$ and $\seqPs$ are more sensitive (Fig. \ref{fig:seq_photoS} c and f). 
% For precision, $\seqIp$ performs best, while $\seqPp$ remains precise but also outperforms the random approach in sensitivity. 
Regarding the prediction for the points identified as part of the excursion set, $\seqIp$ is highly accurate, both when considering the mean alone ($\mathrm{RMSE}_p$, Fig. \ref{fig:seq_photoS}b) and the predictive distribution ($\evaI$, Fig. \ref{fig:seq_photoS}a). However, $\seqIs$ shows a comparable performance and is at the same time more sensitive. Considering the points actually lying in the excursion set, $\seqPs$ shows overall the strongest performance ($\evaP$,$\mathrm{RMSE}_{\Gamma}$ and sensitivity, Fig.~\ref{fig:seq_photoS}d-f). \\

When comparing the new criteria to state-of-the-art methods (Fig. \ref{fig:syn_comp}), we observe that the performance of $\seqPp$ and $\seqPs$ is similar to the existing TMSE and TIMSE criteria, even in terms of $\evaP$. This is expected, as all these methods rely on a Gaussian PDF around the threshold for targeted acquisition. On the other hand, $\seqIp$ and $\seqIs$ clearly outperform the other criteria in terms of $\evaI$ and $\mathrm{RMSE}_p$ (Fig.~\ref{fig:seq_photoS}a+b). 
% Entropy performed badly across all criteria but precision, with both entropy and IBV showing notably poor performance in $\mathrm{RMSE}_{\Gamma}$ (Fig. \ref{fig:syn_comp}e). 
In the real test case, where no ground truth was available (Fig. \ref{fig:real_comp}), we relied on $\evaI$, $\evaP$ and $\evaB$, observing results similar to those in the synthetic test case, which helped confirm the conclusions. \\

% outlook
There are several directions in which the presented work can be extended and deepened. Other acquisition criteria, related to expected scoring rules, can also be explored. For instance, the entropy-based criterion \citep{cole2023entropy} is equivalent to the expected log score for a binary predictor. Similarly, the integrand of the IBV is the Bernoulli variance, which corresponds to the expectation of the Brier Score \citep{brier1950verification, murphy1973new}. Future work could explore further alternatives to the CRPS, including multivariate scoring rules \citep[e.g.,][]{allen2023evaluating}. Within the threshold-weighted CRPS, alternative weighting measures could be explored, such as the corresponding CDF of the Gaussian $\wP$ \citep{allen2024}. This would result in a criterion that is less similar to TMSE and TIMSE. As for the criteria based on $\seqIp$, we could adjust the threshold in the weighting measure to incorporate the region around the threshold, thereby enhancing the method's sensitivity. Next, k-step look-ahead strategies could be explored as an alternative to myopic ones. Furthermore, the initial design is currently random, what could be leveraged by employing a space-filling approach.

% \section*{Acknowledgements}
% We thank Henry Moss, Philippe Schwaller and Ryan-Rhys Griffiths for fruitful discussions in the early stage of this work and Sam Allen for sharing his knowledge about weighted Scoring Rules. We would also like to thank Lorenz Hilfiker for proofreading and commenting the presented work, and to Jasmin Christen and Antoine Faul for valuable inputs along the way.

% \section*{Declaration of Interest}
% The authors report there are no competing interests to declare. 

\section*{Supplementary Material}
% Add a section to your paper titled “Supplementary Materials." This section should appear immediately before the References and should include a title, a brief description, and the file name and type in parentheses, for each supplemental item.
% If you plan to include more than two files of code and data, they should be fully described in a “readme” file. The “readme” file should be mentioned in your supplemental materials section and uploaded with the other supplemental files.
% Authors are encouraged to provide a single archive (either .tar or .zip) file which contains all of the supplemental files. When published, this enables readers to obtain all of the supplemental files via a single download. Please note in the “Supplemental Materials” section the fact that all of the supplemental files are contained in a single archive.
\begin{description}
\item PDF containing (1) more on Gaussian Process modeling, (2) a kernel comparison with the (original) Photoswitch dataset, (3) proofs of the analytical expressions for the threshold-weighted CRPS and (4) a proof of consistency for the SUR strategy.
\end{description}

\section*{Acknowledgment of AI Assistance}
In the preparation of this manuscript, ChatGPT was used to check for grammar, clarity, and typographical errors. The AI tool was not used for content generation, data analysis, or substantive writing. All final edits and decisions were made by the authors.

\bibliographystyle{apalike} %or ieeetr, plainnat
\bibliography{ref}

\end{document}

% --- supplement: supplementary.tex ---

\def\spacingset#1{\renewcommand{\baselinestretch}%
{#1}\small\normalsize} \spacingset{1.5}

\if0\blind
{\title{\bf CRPS-Based Targeted Sequential Design with Application in Chemical Space \\[10pt] \large{Supplementary Material }}
  \author{Lea Friedli, Athénaïs Gautier, Anna Broccard and 
    David Ginsbourger }
    \date{}
  \maketitle \
} \fi

\appendix

\vspace{-1.5cm}
\section{More on Gaussian Process modeling}
\label{appGPmodeling}

Here, we expand on Section 2.1 and provide additional details on the distribution of a GP~$\boldsymbol{\xi}$ conditioned on \(\mathcal{A}_n\), denoted by \(\boldsymbol{\xi_n} = \left(\xi_n(\boldsymbol{x})\right)_{\boldsymbol{x} \in \mathcal{X}}\). Under mild conditions (invertibility of the matrix \(\boldsymbol{K_{\mathrm{obs}}}\)), \(\boldsymbol{\xi_n}\) remains a GP. Indeed, if the mean $m$ of the prior GP $\boldsymbol{\xi}$ is assumed to be known and constant, this refers to the setting of simple kriging \citep{chiles2012geostatistics}. In practice, the mean function may be unknown, which motivates so-called ordinary kriging with a constant unknown mean and universal kriging, where the mean is assumed to be a linear combination of prescribed basis functions with unknown coefficients. In this work, we focus on the case of ordinary kriging, where $\boldsymbol{\xi_n} \sim \mathrm{GP}\left(m_n, k_n \right)$ with the conditional mean function $m_n : \mathcal{X} \to \mathbb{R}$ and conditional covariance function $k_n : \mathcal{X} \times \mathcal{X} \to \mathbb{R}$ being defined as (for $\boldsymbol{x}, \boldsymbol{x'} \in \mathcal{X}$):
    \begin{align*}
        \label{krig}
        & m_n(\boldsymbol{x}) = \hat{\beta}  + \boldsymbol{K}^\top_{\mathrm{cross}}( \boldsymbol{x} )\boldsymbol{K}_{\mathrm{obs}}^{-1}(\boldsymbol{z_{n}} - \hat{\beta} \mathbf{1}_n ),\\
        &k_n(\boldsymbol{x}, \boldsymbol{x'}) = k(\boldsymbol{x},\boldsymbol{x'}) - \boldsymbol{K}^\top_{\mathrm{cross}}( \boldsymbol{x} ) \boldsymbol{K}_{\mathrm{obs}}^{-1} \boldsymbol{K}_{\mathrm{cross}}( \boldsymbol{x'} ) \nonumber\\
        & \hspace{52pt} + \frac{\left( 1- \boldsymbol{K}^\top_{\mathrm{cross}}( \boldsymbol{x} )  \boldsymbol{K}_{\mathrm{obs}}^{-1} \mathbf{1}_n \right)^\top 
        \left(1- \boldsymbol{K}^\top_{\mathrm{cross}}( \boldsymbol{x'} )  \boldsymbol{K}_{\mathrm{obs}}^{-1} \mathbf{1}_n \right)}{ \mathbf{1}^\top_n  \boldsymbol{K}_{\mathrm{obs}}^{-1} \mathbf{1}_n}, \nonumber
    \end{align*}
    with $\mathbf{1}_n$ denoting the vector of $1$s of size $n$, $\boldsymbol{K}_{\mathrm{cross}}(\boldsymbol{x})=[k(\boldsymbol{x},\boldsymbol{x_i})]_{1 \leq i \leq n}$ and the estimator of the mean $\hat \beta = \frac{\mathbf{1}^\top_n  \boldsymbol{K}_{\mathrm{obs}}^{-1} \boldsymbol{z_{n}} }{\mathbf{1}^\top_n  \boldsymbol{K}_{\mathrm{obs}}^{-1} \mathbf{1}_n}$. \\  
    
In the context of sequential data acquisition, we can facilitate the computations by considering updating formulas \citep{emery2009kriging, chevalier2013corrected}. Considering the expanded observation event $\mathcal{A}_{n+1} = \{ \boldsymbol{Z_n} = \boldsymbol{z_n}, Z_{n+1} = z_{n+1}  \}$ with $z_{n+1} = f(\boldsymbol{x_{n+1}}) + \epsilon_{n+1}$, the mean and covariance function of $\boldsymbol{\xi_{n+1}} \sim \mathrm{GP}(m_{n+1}, k_{n+1})$ can be updated as:  
        \begin{align*}
            &m_{n+1}(\boldsymbol{x}) = m_{n}(\boldsymbol{x}) + \frac{k_{n}(\boldsymbol{x_{n+1}}, \boldsymbol{x})}{(k_{n}(\boldsymbol{x_{n+1}}, \boldsymbol{x_{n+1}}) + \tau^2)} \left(z_{n+1} - m_{n}(\boldsymbol{x_{n+1}}) \right), \\
            &k_{n+1}(\boldsymbol{x}, \boldsymbol{x'}) = k_{n}(\boldsymbol{x}, \boldsymbol{x'}) - \frac{k_{n}(\boldsymbol{x_{n+1}}, \boldsymbol{x})k_{n}(\boldsymbol{x_{n+1}}, \boldsymbol{x'})}{(k_{n}(\boldsymbol{x_{n+1}}, \boldsymbol{x_{n+1}}) + \tau^2)}. \nonumber
        \end{align*}

\section{Kernel comparison with the (original) Photoswitch dataset}
\label{appA}

Classically used families of covariance kernels for Euclidean input spaces include notably the family of isotropic Matérn kernels, encompassing the isotropic exponential and Gaussian (or \text{square-exponential}) covariance kernels as limiting cases \citep{stein2012interpolation}. Isotropy means in this context that kernel values between pairs of locations depend on the Euclidean distance between them. In multidimensional settings, anisotropic extensions of such kernels are also commonplace. The isotropic exponential and Gaussian kernels are given by,
\begin{equation}
    k_{\mathrm{EXP}}(\boldsymbol{x}, \boldsymbol{x'}) = \sigma_k^2 \exp \left( - \frac{||\boldsymbol{x}-\boldsymbol{x'}||}{\theta_k} \right), \quad 
    k_{\mathrm{GAU}}(\boldsymbol{x}, \boldsymbol{x'}) = \sigma_k^2 \exp \left(- \frac{||\boldsymbol{x}-\boldsymbol{x'}||^2}{2 \theta_k^2} \right),
\end{equation}
using the Euclidean distance between the input locations, $||\boldsymbol{x}-\boldsymbol{x'}|| = \sqrt{(\boldsymbol{x}-\boldsymbol{x'})^\top(\boldsymbol{x}-\boldsymbol{x'})}$ and the variance parameter $\sigma_k^2 > 0$ and correlation length $\theta_k > 0$. \\

For the choice of kernel used on the original dataset, we compare the performance of the Tanimoto, Gaussian and Exponential kernels using $10,20,30\%$ of the points for training, as shown in Table \ref{tab:kernel}. The Tanimoto kernel consistently outperforms the Exponential kernel in all scenarios for RMSE and CRPS. Interestingly, while the Gaussian kernel showed comparable performance to the Tanimoto kernel when trained on 20 or 30 $\%$ of the data, it performed exceptionally poorly with only 10$\%$. This may be due to difficulties in estimating the hyperparameters of the kernel with only $n=39$ molecules. Given the consistent performance of the Tanimoto kernel, even with small training datasets, we use it for our sequential work on molecules, especially as it is more interpretable than a stationary kernel in this context. \\
% ... and is easier to fit due to having less hyperparameters to optimize ?

\begin{table}[h]
    \centering
    \begin{tabular}{l|lll|lll|lll}
        Kernel & $k_{\mathrm{TAN}}$ & $k_{\mathrm{EXP}}$ & $k_{\mathrm{GAU}}$  & $k_{\mathrm{TAN}}$ & $k_{\mathrm{EXP}}$ & $k_{\mathrm{GAU}}$ &
         $k_{\mathrm{TAN}}$ & $k_{\mathrm{EXP}}$ & $k_{\mathrm{GAU}}$\\
        $n$ & $10 \%$ &&& $20 \%$ &&& $30 \%$ \\
        \hline
        RMSE & 40.07 & 41.37 & 55.65 & 33.32 & 34.62 & 35.44 & 29.70 & 30.94 & 29.90 \\ 
        CRPS & 21.65 & 22.60 & 31.52 & 17.78 & 18.87 & 18.96 & 15.74 & 16.83 & 15.62 \\ 
        % RMSE & 40.07 & 41.37 & 55.65 & 33.32 & 34.62 & 35.44 & 29.7 & 30.94 & 29.9 \\ 
        % CRPS & 21.49 & 22.54 & 31.39 & 17.51 & 18.61 & 18.58 & 15.31 & 16.43 & 15.14 \\
        %\hline 
        %\textbf{ESOL} & &&&&&&& \\
        %RMSE & 1.57 & 1.59 & 1.55 & 1.39 & 1.40 & 1.36 & 1.26 & 1.28 & 1.24  \\ 
        %CRPS & 0.92 & 0.89 & 0.87 & 0.77 & 0.78 & 0.76 & 0.70 & 0.72 & 0.69 
    \end{tabular}
    \caption{RMSE and CRPS values for different train/test splits using the Tanimoto, exponential, and Gaussian kernels (averaged over the same 30 random splits for each).}
    \label{tab:kernel}
\end{table}

\section{Proofs threshold-weighted CRPS}

\subsection{Theorem 1: Scoring Rules}
\label{appendix_twcrps1}

It holds that,
\begin{equation}
\seqIp(\GPpredictCDF, \GPobs) = \int_{-\infty}^{\infty} \left[ \Phi \left( \frac{u - \GPpredictmean}{\GPpredictsd} \right) - \mathbbm{1}\{\GPobs \leq u\} \right]^2 \mathbbm{1}\{\t \leq u\} \, \mathrm{d}u = \mathrm{CRPS}(\GPpredictCDF^{\t}, \GPobs^{\t}), \nonumber
\end{equation}
with the traditional CRPS for the left-censored Gaussian $\GPpredictCDF^{\t}(u) = \Phi \left( \frac{u - \GPpredictmean}{\GPpredictsd} \right) \mathbbm{1} \{ u \geq \t \}$ and censored value $\GPobs^{\t} = \max(\GPobs,\t)$ \citep{allen2023weighted}:
\begin{align*}
    \mathrm{CRPS}(\GPpredictCDF^{\t}, \GPobs^{\t}) &= \int_{-\infty}^{\infty} \left[ \Phi \left( \frac{u - \GPpredictmean}{\GPpredictsd} \right) \mathbbm{1} \{ \t \leq u \} - \mathbbm{1}\{\max(\GPobs,\t) \leq u\} \right]^2 \, \mathrm{d}u \\
    &= \int_{-\infty}^{\t} \mathbbm{1}\{\max(\GPobs,\t) \leq u\} \, \mathrm{d}u + \int_{\t}^{\infty} \left[ \Phi \left( \frac{u - \GPpredictmean}{\GPpredictsd} \right) - \mathbbm{1}\{\max(\GPobs,\t) \leq u\} \right]^2 \, \mathrm{d}u \\
    &= \int_{\infty}^{\infty} \left[ \Phi \left( \frac{u - \GPpredictmean}{\GPpredictsd} \right) - \mathbbm{1}\{\max(\GPobs, \t) \leq u\} \right]^2 \mathbbm{1} \{ u \geq \t \} \, \mathrm{d}u \\
    &= \int_{\infty}^{\infty} \left[ \Phi \left( \frac{u - \GPpredictmean}{\GPpredictsd} \right) - \mathbbm{1}\{\GPobs \leq u\} \mathbbm{1}\{\t \leq u\} \right]^2 \mathbbm{1} \{ u \geq \t \} \, \mathrm{d}u \\
    &= \int_{-\infty}^{\infty} \left[ \Phi \left( \frac{u - \GPpredictmean}{\GPpredictsd} \right) - \mathbbm{1}\{\GPobs \leq u\} \right]^2 \mathbbm{1}\{u \geq \t \} \, \mathrm{d}u.
\end{align*}

The CRPS for a censored normal distribution has an analytical formula \citep{jordan1019}. For a left-censored standard normal distribution $\Phi^{\t}$ with upper bound $\infty$, lower bound $\t$ and a point mass of $L$ at the lower bound, it holds,
\begin{align*}
    \mathrm{CRPS}(\Phi^{\t}, y^{\t}) &= -t L^2
    + \frac{(1-L)}{\left( 1 - \Phi(\t) \right) } y^{\t} \left( 2 \Phi(y^{\t}) - \frac{(1-2L) + \Phi(\t)}{1-L}  \right) \\ 
    &\quad \quad + \frac{(1-L)}{\left( 1 - \Phi(\t) \right) } \left( 2 \phi \left( y^{\t} \right) - 2 \phi \left( \t \right) L \right)\\ 
    &\quad \quad - \left(\frac{(1-L)}{\left( 1 - \Phi(\t) \right) }\right)^2  \frac{1}{\sqrt{\pi}} \left( 1 - \Phi(\t \sqrt{2}) \right).
\end{align*}

% http://cran.nexr.com/web/packages/scoringRules/vignettes/crpsformulas.html#GenNormal
By using the transformation formula for a censored normal distribution $\GPpredictCDF^{\t}$ \citep{jordan1019} and the point mass in $\t$ being $L = \GPpredictCDF(\t) = \Phi(\widetilde{\t})$ for $\widetilde{\t} = (\t - \GPpredictmean)/\GPpredictsd$, we denote by $\widetilde{\GPobs^t} = \Big(\max(\GPobs,\t) - \GPpredictmean \Big)/\GPpredictsd$ and get,
\begin{align*}
    \seqIp(\GPpredictCDF, \GPobs) &=\mathrm{CRPS}(\GPpredictCDF^{\t}, \GPobs^{\t}) 
    = \GPpredictsd \; \mathrm{CRPS}\left( \Phi^{\widetilde{\t}}, \widetilde{y^{\t}} \right) \\
    &= \GPpredictsd \left[ -\widetilde{\t} \Phi(\widetilde{\t})^2
    + \frac{\left(1-\Phi(\widetilde{\t})\right)}{\left( 1 - \Phi(\widetilde{\t}) \right) } \widetilde{y^{\t}} \left( 2 \Phi(\widetilde{y^{\t}}) - \frac{\left(1-2\Phi(\widetilde{\t})\right) + \Phi(\widetilde{\t})}{1-\Phi(\widetilde{\t})}  \right) \right.\\ 
    &\quad \quad + \frac{\left(1-\Phi(\widetilde{\t})\right)}{\left( 1 - \Phi(\widetilde{\t}) \right) } \left( 2 \phi \left( \widetilde{y^{\t}} \right) - 2 \phi \left( \widetilde{\t} \right) \Phi(\widetilde{\t}) \right) \left.- \left(\frac{\left(1- \Phi(\widetilde{\t})\right)}{\left( 1 - \Phi(\widetilde{\t}) \right) }\right)^2  \frac{1}{\sqrt{\pi}} \left( 1 - \Phi(\widetilde{\t} \sqrt{2}) \right) \right] \\
    &= \GPpredictsd \left[ -\widetilde{\t} \Phi(\widetilde{\t})^2
    + \widetilde{y^{\t}} \left( 2 \Phi(\widetilde{y^{\t}}) - 1  \right) +  \left( 2 \phi \left( \widetilde{y^{\t}} \right) - 2 \phi \left( \widetilde{\t} \right) \Phi(\widetilde{\t}) \right) -  \frac{1}{\sqrt{\pi}} \left( 1 - \Phi(\widetilde{\t} \sqrt{2}) \right) \right].
\end{align*}

For $\wP$, we derive:
\begin{align*}
    \seqPp(\GPpredictCDF, \GPobs) 
    &= \int_{-\infty}^{\infty} \left[ \Phi \left( \frac{u-\GPpredictmean}{\GPpredictsd} \right) - \mathbbm{1}\{\GPobs \leq u\} \right]^2 \phi \left( \frac{u -\t}{\sigmaY} \right) \frac{1}{\sigmaY} \, \mathrm{d}u \\
    &= \underbrace{\int_{-\infty}^{\infty}  \Phi \left( \frac{u-\GPpredictmean}{\GPpredictsd} \right)^2 \phi \left( \frac{u -\t}{\sigmaY} \right) \frac{1}{\sigmaY} \, \mathrm{d}u}_{(1)} \\
    &\quad \quad -2 \underbrace{\int_{-\infty}^{\infty} \Phi \left( \frac{u-\GPpredictmean}{\GPpredictsd} \right) \mathbbm{1}\{\GPobs \leq u\} \phi \left( \frac{u -\t}{\sigmaY} \right) \frac{1}{\sigmaY} \, \mathrm{d}u}_{(2)} \\
    &\quad \quad + \underbrace{\int_{-\infty}^{\infty} \mathbbm{1}\{\GPobs \leq u\} \phi \left( \frac{u -\t}{\sigmaY} \right) \frac{1}{\sigmaY} \, \mathrm{d}u}_{(3)}.
\end{align*}

The three parts can be simplified as follows: \\
(1) For $N, \widetilde{N}, U' \overset{i.i.d.}{\sim} \mathcal{N}(0,1)$ and using a variable transform $u' = \frac{u -\t}{\sigmaY}$:
\begin{align*}
    \int_{-\infty}^{\infty}  \Phi \left( \frac{u-\GPpredictmean}{\GPpredictsd} \right)^2 \frac{1}{\sigmaY} \phi \left( \frac{u -\t}{\sigmaY} \right)  \, \mathrm{d}u
    &= \int_{-\infty}^{\infty}  \Phi \left( \frac{\sigmaY u' + \t -\GPpredictmean}{\GPpredictsd} \right)^2 \phi \left( u' \right)  \, \mathrm{d}u' \\
    &= \int_{-\infty}^{\infty}  \mathbb{P} \left( N \leq \frac{\sigmaY u' + \t -\GPpredictmean}{\GPpredictsd} \right) \mathbb{P} \left( \widetilde{N} \leq \frac{\sigmaY u' + \t -\GPpredictmean}{\GPpredictsd} \right) \phi \left( u' \right)  \, \mathrm{d}u'\\
    % &= \mathbb{P} \left( N \leq \frac{\sigmaYU' + \t -\GPpredictmean}{\GPpredictsd} \right) \mathbb{P} \left( \widetilde{N} \leq \frac{\sigmaYU' + \t -\GPpredictmean}{\GPpredictsd} \right) \\
    &=  \mathbb{P} \left( \GPpredictsd N -\t + \GPpredictmean - \sigmaY U' \leq 0, \GPpredictsd \widetilde{N} -\t + \GPpredictmean - \sigmaY U' \leq 0  \right) \\
    &= \Phi \left( \begin{pmatrix} 0 \\ 0 \end{pmatrix}; \begin{pmatrix} \GPpredictmean - \t \\ \GPpredictmean - \t \end{pmatrix},  
    \begin{pmatrix} \sigmaY^2 + \GPpredictvar & \sigmaY^2 \\ \sigmaY^2 & \sigmaY^2 + \GPpredictvar \end{pmatrix}  \right)
\end{align*}

(2) For $V, U'  \overset{i.i.d.}{\sim} \mathcal{N}(0,1)$ and using again $u' = \frac{u -\t}{\sigmaY}$:
\begin{align*}
    \int_{-\infty}^{\infty} &\Phi \left( \frac{u-\GPpredictmean}{\GPpredictsd} \right) \mathbbm{1}\{\GPobs \leq u\} \phi \left( \frac{u -\t}{\sigmaY} \right) \frac{1}{\sigmaY} \, \mathrm{d}u \\
    &= \int_{-\infty}^{\infty} \Phi \left( \frac{\sigmaY u' + \t -\GPpredictmean}{\GPpredictsd} \right) \mathbbm{1}\{\GPobs \leq \sigmaY u' + \t \} \; \phi \left( u' \right) \, \mathrm{d}u' \\
    &= \int_{-\infty}^{\infty} \int_{-\infty}^{\infty} 
    \mathbbm{1} \{ v \leq \frac{\sigmaY u' + \t -\GPpredictmean}{\GPpredictsd} \} \, \phi(v) \, \mathrm{d}v \,
     \mathbbm{1}\{\GPobs \leq \sigmaY u' + \t \} \; \phi \left( u' \right) \, \mathrm{d}u' \\
    &=  \mathbb{P} \left( \GPpredictsd V - 
    \sigmaY U' + \GPpredictmean  - \t \leq 0, \GPobs - \sigmaY U' - \t \leq 0  \right) \\
    &= \Phi \left( \begin{pmatrix} 0 \\ 0 \end{pmatrix}; \begin{pmatrix} \GPpredictmean - \t \\ y - \t \end{pmatrix},  
    \begin{pmatrix} \sigmaY^2 + \GPpredictvar & \sigmaY^2 \\ \sigmaY^2 & \sigmaY^2  \end{pmatrix}  \right)
\end{align*}

(3) Reusing $u' = \frac{u -\t}{\sigmaY}$, it holds that:
\begin{align*}
   \int_{-\infty}^{\infty} \mathbbm{1}\{\GPobs \leq u\} \phi \left( \frac{u -\t}{\sigmaY} \right) \frac{1}{\sigmaY} \, \mathrm{d}u 
   = \int_{-\infty}^{\infty} \mathbbm{1} \{ \frac{\GPobs -\t}{\sigmaY} \leq u' \}  \phi \left( u' \right)  \mathrm{d}u' 
   = \Phi \left( \frac{\t - \GPobs}{\sigmaY} \right)
\end{align*}

Eventually, we obtain:
\begin{align*}
    \seqPp(\GPpredictCDF, \GPobs) 
    &= \Phi \left( \begin{pmatrix} 0 \\ 0 \end{pmatrix}; \begin{pmatrix} \GPpredictmean - \t \\ \GPpredictmean - \t \end{pmatrix},  
    \begin{pmatrix} \sigmaY^2 + \GPpredictvar & \sigmaY^2 \\ \sigmaY^2 & \sigmaY^2 + \GPpredictvar \end{pmatrix}  \right) \\
    &\quad \quad -2  
    \Phi \left( \begin{pmatrix} 0 \\ 0 \end{pmatrix}; \begin{pmatrix} \GPpredictmean - \t \\ y - \t \end{pmatrix},  
    \begin{pmatrix} \sigmaY^2 + \GPpredictvar & \sigmaY^2 \\ \sigmaY^2 & \sigmaY^2  \end{pmatrix}  \right) + \Phi \left( \frac{\t - \GPobs}{\sigmaY} \right).
\end{align*}

\subsection{Theorem 2: Pointwise criteria}
\label{appendix_twcrps2}

Let us start with the general result, where $\rho(\cdot)$ denotes the PDF $F$:
    \begin{align*}
   \twcrps(F) 
   &= \int_{-\infty}^{\infty} \twcrps (F, y') \rho(y') \, \mathrm{d}y'  \\
   &= \int_{-\infty}^{\infty} \int_{-\infty}^{\infty} [F(u) - \mathbbm{1}\{y' \leq u\}]^2 \, \intB \, \rho(y') \, \mathrm{d}y'  \\
   &= \int_{-\infty}^{\infty} \int_{-\infty}^{\infty} \left[ F(u)^2 - 2 F(u) \mathbbm{1}\{y' \leq u\} + \mathbbm{1}\{y' \leq u\} \right] \, \rho(y') \, \mathrm{d}y' \, \intB \\
   &= \int_{-\infty}^{\infty} \left[
   F(u)^2 - 2 F(u)^2 + F(u) \right] \,  \intB \\
   &= \int F(u) \left( 1-F(u) \right) \, \intB
\end{align*}

Let us consider $F = \GPpredictCDF$ and the first representation for $\wI$. With defining $u^{\left(\widetilde{\t}\right)} = \max(u,\widetilde{t})$ and transforming $u = \frac{\GPobs'-\GPpredictmean}{\GPpredictsd} $, we get:
\begin{align*}
    \seqIp(\GPpredictCDF) &=\int  \seqIp(\GPpredictCDF, \GPobs') \GPpredictPDF(y') \mathrm{d} y' \\
    % &= \int \mathrm{CRPS}(\GPpredictCDF^{\t}, \max(\GPobs',\t)) \GPpredictPDF(y') \mathrm{d} y' \\
    &= \int \mathrm{CRPS}(\GPpredictCDF^{\t}, \max(\GPobs',\t)) \phi \left( \frac{\GPobs'-\GPpredictmean}{\GPpredictsd} \right) \frac{1}{\GPpredictsd}  \mathrm{d} y' \\
    &= \int \mathrm{CRPS}(\GPpredictCDF^{\t}, \max(\GPpredictsd u + \GPpredictmean, \t) )\phi(u) \mathrm{d} u \\
    &= \int \GPpredictsd \mathrm{CRPS} \left( \Phi^{\widetilde{\t}}, u^{\left(\widetilde{\t}\right)} \right)\phi(u) \mathrm{d} u \\
    &= \int \GPpredictsd \Big[ -\widetilde{\t} \Phi(\widetilde{\t})^2
    + u^{\left(\widetilde{\t}\right)} \left( 2 \Phi\left( u^{\left(\widetilde{\t}\right)} \right) - 1  \right) +  \left( 2 \phi \left( u^{\left(\widetilde{\t}\right)} \right) - 2 \phi \left( \widetilde{\t} \right) \Phi(\widetilde{\t}) \right)  \\
    & \quad \quad - \frac{1}{\sqrt{\pi}} \left( 1 - \Phi(\widetilde{\t} \sqrt{2}) \right) \Big]   \phi(u) \mathrm{d} u \\
    &= \GPpredictsd \underbrace{\int -\widetilde{\t} \Phi(\widetilde{\t})^2 \phi(u) \mathrm{d} u}_{= -\widetilde{\t} \Phi(\widetilde{\t})^2}
    + \GPpredictsd \underbrace{\int u^{\left(\widetilde{\t}\right)} \left( 2 \Phi(u^{\left(\widetilde{\t}\right)}) - 1  \right) \phi(u) \mathrm{d} u }_{(1)}
    \\ & \quad \quad
    + \GPpredictsd \underbrace{\int \left( 2 \phi \left( u^{\left(\widetilde{\t}\right)} \right) - 2 \phi \left( \widetilde{\t} \right) \Phi(\widetilde{\t}) \right) \phi(u) \mathrm{d} u }_{(2)}
    - \GPpredictsd \underbrace{\int \frac{1}{\sqrt{\pi}} \left( 1 - \Phi(\widetilde{\t} \sqrt{2}) \right) \phi(u) \mathrm{d} u}_{= \frac{1}{\sqrt{\pi}} \left( 1 - \Phi(\widetilde{\t} \sqrt{2}) \right)}.
\end{align*}

The details for the two numbered parts are as follows: \\
(1) Using \citet{owen1980table} Equations (1011.1) and (11):
\begin{align*}
    \int u^{\left(\widetilde{\t}\right)} &\left( 2 \Phi \left(u^{\left(\widetilde{\t}\right)} \right) - 1  \right) \phi(u) \mathrm{d} u = 
    \int_{-\infty}^{\widetilde{\t}} \widetilde{\t} \left( 2 \Phi(\widetilde{\t}) - 1  \right) \phi(u) \mathrm{d} u 
    + \int_{\widetilde{\t}}^{\infty} u \left( 2 \Phi(u) - 1  \right) \phi(u) \mathrm{d} u \\
    &= \widetilde{\t} \left( 2 \Phi(\widetilde{\t}) - 1  \right) \Phi(\widetilde{\t}) 
    + 2\int_{\widetilde{\t}}^{\infty} u \Phi(u)  \phi(u) \mathrm{d} u - \int_{\widetilde{\t}}^{\infty} u \phi(u) \mathrm{d} \\
    &= \widetilde{\t} \left( 2 \Phi(\widetilde{\t}) - 1  \right) \Phi(\widetilde{\t}) 
    + 2 \Big[\frac{1}{2 \sqrt{\pi}} \Phi(u\sqrt{2}) - \phi(u) \Phi(u)  \Big]_{\widetilde{\t}}^{\infty}
    - \Big[-\phi(u) \Big]_{\widetilde{\t}}^{\infty} \\
    &= \widetilde{\t} \left( 2 \Phi(\widetilde{\t}) - 1  \right) \Phi(\widetilde{\t}) 
    + \frac{1}{\sqrt{\pi}} - \frac{1}{\sqrt{\pi}} \Phi(\widetilde{\t} \sqrt{2}) + 2 \phi(\widetilde{\t}) \Phi(\widetilde{\t}) - \phi(\widetilde{\t}) \\
    &= 2 \widetilde{\t} \Phi(\widetilde{\t})^2 - \widetilde{\t}\Phi(\widetilde{\t}) 
    + \frac{1}{\sqrt{\pi}} - \frac{1}{\sqrt{\pi}} \Phi(\widetilde{\t} \sqrt{2}) + 2 \phi(\widetilde{\t}) \Phi(\widetilde{\t}) - \phi(\widetilde{\t})
\end{align*}

(2) Using \citet{owen1980table} Equation (20)
\begin{align*}
    \int & \left( 2 \phi \left( u^{\left(\widetilde{\t}\right)} \right) - 2 \phi \left( \widetilde{\t} \right) \Phi(\widetilde{\t}) \right) \phi(u) \mathrm{d} u \\ 
    &= \int_{-\infty}^{\widetilde{\t}} \left( 2 \phi \left( \widetilde{\t} \right) - 2 \phi \left( \widetilde{\t} \right) \Phi(\widetilde{\t}) \right) \phi(u) \mathrm{d} u  
    + \int_{\widetilde{\t}}^{\infty} \left( 2 \phi \left( u \right) - 2 \phi \left( \widetilde{\t} \right) \Phi(\widetilde{\t}) \right) \phi(u) \mathrm{d} u \\
    &= \left( 2 \phi \left( \widetilde{\t} \right) - 2 \phi \left( \widetilde{\t} \right) \Phi(\widetilde{\t}) \right) \Phi(\widetilde{\t})
    + 2 \int_{\widetilde{\t}}^{\infty} \phi \left( u \right)^2 \mathrm{d} u - 2 \phi \left( \widetilde{\t} \right) \Phi(\widetilde{\t}) \int_{\widetilde{\t}}^{\infty} \phi(u)  \mathrm{d} u \\
    &= \left( 2 \phi \left( \widetilde{\t} \right) - 2 \phi \left( \widetilde{\t} \right) \Phi(\widetilde{\t}) \right) \Phi(\widetilde{\t})
    + 2 \Big[\frac{1}{2\sqrt{\pi}} \Phi(u \sqrt{2}) \Big]_{\widetilde{\t}}^{\infty}  - 2 \phi \left( \widetilde{\t} \right) \Phi(\widetilde{\t}) (1-\Phi(\widetilde{\t})) \\
    &= \left( 2 \phi \left( \widetilde{\t} \right) - 2 \phi \left( \widetilde{\t} \right) \Phi(\widetilde{\t}) \right) \Phi(\widetilde{\t})
    + \frac{1}{\sqrt{\pi}} \left(1 -  \Phi(\widetilde{\t} \sqrt{2}) \right)  - 2 \phi \left( \widetilde{\t} \right) \Phi(\widetilde{\t}) (1-\Phi(\widetilde{\t})) \\
    &= 2 \phi \left( \widetilde{\t} \right) \Phi(\widetilde{\t}) - 2 \phi \left( \widetilde{\t} \right) \Phi(\widetilde{\t})^2
    + \frac{1}{\sqrt{\pi}} \left(1 -  \Phi(\widetilde{\t} \sqrt{2}) \right)  - 2 \phi \left( \widetilde{\t} \right) \Phi(\widetilde{\t}) + 2 \phi \left( \widetilde{\t} \right)\Phi(\widetilde{\t})^2 \\
    &= \frac{1}{\sqrt{\pi}} \left(1 -  \Phi(\widetilde{\t} \sqrt{2}) \right)  
\end{align*}

Eventually, we get: 
\begin{align*}
    \seqIp(\GPpredictCDF) &= \GPpredictsd \Big[ -\widetilde{\t} \Phi(\widetilde{\t})^2 +
    2 \widetilde{\t} \Phi(\widetilde{\t})^2 - \widetilde{\t}\Phi(\widetilde{\t})  + \frac{1}{\sqrt{\pi}} - \frac{1}{\sqrt{\pi}} \Phi(\widetilde{\t} \sqrt{2}) + 2 \phi(\widetilde{\t}) \Phi(\widetilde{\t}) - \phi(\widetilde{\t}) \\
    & \quad \quad
    + \frac{1}{\sqrt{\pi}} \left(1 -  \Phi(\widetilde{\t} \sqrt{2}) \right)  - \frac{1}{\sqrt{\pi}} \left( 1 - \Phi(\widetilde{\t} \sqrt{2}) \right)
    \Big] \\
    &= \GPpredictsd \Big[ \widetilde{\t} \Phi(\widetilde{\t})^2 - \widetilde{\t}\Phi(\widetilde{\t})  + \frac{1}{\sqrt{\pi}} - \frac{1}{\sqrt{\pi}} \Phi(\widetilde{\t} \sqrt{2}) + 2 \phi(\widetilde{\t}) \Phi(\widetilde{\t}) - \phi(\widetilde{\t}) \Big]
\end{align*}

The alternative representation is obtained using $N, \widetilde{N}  \overset{i.i.d.}{\sim} \mathcal{N}(0,1)$ and $u' = \frac{u - \GPpredictmean}{\GPpredictsd}$:
\begin{align*}
    \seqIp(\GPpredictCDF) 
    &= \int_{\t}^{\infty}
    \Phi \left( \frac{u - \GPpredictmean}{\GPpredictsd} \right) \left( 1- \Phi \left( \frac{u - \GPpredictmean}{\GPpredictsd} \right) \right)  \, \mathrm{d}u \\
    &= \int_{\t}^{\infty} 
    \mathbb{P} \left( N \leq \frac{u - \GPpredictmean}{\GPpredictsd} \right) 
    \mathbb{P} \left( \widetilde{N} \geq \frac{u - \GPpredictmean}{\GPpredictsd} \right)  \, \mathrm{d}u \\
    &= \int_{\t}^{\infty} \mathbb{E} \left[
    \mathbbm{1} \left( N \leq \frac{u - \GPpredictmean}{\GPpredictsd} \leq \widetilde{N}  \right) \right] \, \mathrm{d}u \\
    &= \mathbb{E} \left[ \int
    \mathbbm{1} \left( \max \left( N,\frac{\t-\GPpredictmean}{\GPpredictsd} \right) \leq \frac{u - \GPpredictmean}{\GPpredictsd} \leq \widetilde{N}  \right)  \, \mathrm{d}u  \right]\\
    &= \mathbb{E} \left[ \GPpredictsd \int
    \mathbbm{1} \left( \max \left( N,\frac{\t-\GPpredictmean}{\GPpredictsd} \right) \leq u' \leq \widetilde{N}  \right)  \, \mathrm{d}u'  \right]\\
    &= \GPpredictsd \mathbb{E} \left[ \left( \widetilde{N} - \max \left( N, \frac{\t-\GPpredictmean}{\GPpredictsd} \right) \right)_+ \right].
\end{align*}

For $\wP$, assuming $N, \widetilde{N}, N' \sim \mathcal{N}(0,1)$ independent and transform $n' = \frac{u - \t}{\sigmaY}$, we get:
\begin{align*}
    \seqPp(\GPpredictCDF) 
    &= \int 
    \Phi \left( \frac{u - \GPpredictmean}{\GPpredictsd} \right) \left( 1- \Phi \left( \frac{u - \GPpredictmean}{\GPpredictsd} \right) \right) \frac{1}{\sigmaY} \phi \left( \frac{u-t}{\sigmaY} \right) \, \mathrm{d}u \\
    &= \int     \mathbb{P} \left( N \leq \frac{u - \GPpredictmean}{\GPpredictsd} \right) \mathbb{P} \left( \widetilde{N} \leq - \frac{u - \GPpredictmean}{\GPpredictsd} \right)
    \frac{1}{\sigmaY} \phi \left( \frac{u-t}{\sigmaY} \right) \, \mathrm{d}u \\
    &= \int     \mathbb{P} \left( N \leq \frac{t + \sigmaY n' - \GPpredictmean}{\GPpredictsd} \right) \mathbb{P} \left( \widetilde{N} \leq - \frac{t + \sigmaY n' - \GPpredictmean}{\GPpredictsd} \right)
     \phi \left( n' \right) \, \mathrm{d}n' \\
    % &= \int     \mathbb{P} \left( n' \geq \frac{\GPpredictsd N - \t + \GPpredictmean}{\sigmaY} \right) \mathbb{P} \left( n' \leq \frac{- \GPpredictsd \widetilde{N} -\t + \GPpredictmean}{\sigmaY} \right) \phi \left( n' \right) \, \mathrm{d}n' \\ 
    &= \mathbb{P} \left( N' \geq \frac{\GPpredictsd N - \t + \GPpredictmean}{\sigmaY}, 
    N' \leq \frac{- \GPpredictsd \widetilde{N} -\t + \GPpredictmean}{\sigmaY} \right)\\
    &= \mathbb{P} \left( \GPpredictsd N - \t + \GPpredictmean - \sigmaY N' \leq 0, 
    \GPpredictsd \widetilde{N} +\t - \GPpredictmean + \sigmaY  N' \leq 0 \right) \\
    &= \Phi \left( \begin{pmatrix} 0 \\ 0 \end{pmatrix}; \begin{pmatrix}  \GPpredictmean - \t \\ \t -\GPpredictmean \end{pmatrix},  
    \begin{pmatrix} \sigmaY^2 + \GPpredictvar & -\sigmaY^2 \\ -\sigmaY^2 & \sigmaY^2 + \GPpredictvar \end{pmatrix}  \right)  
\end{align*}

\subsection{Theorem 3: SUR criteria}
\label{appendix_twcrps3}

Recall that we consider the conditional GP $\GPupdate \sim \mathrm{GP}(\GPupdateM, \GPupdateK)$ and its predecessor $\boldsymbol{\xi_n} \sim \mathrm{GP}(m_n, k_n)$. For fixed $\boldsymbol{x'} \in \mathcal{X}$ and using $\sigma^2_{n+1} = \GPupdateK(\boldsymbol{x'}, \boldsymbol{x'})$, $\mu_n = m_n(\boldsymbol{x'})$, $\alpha_n = \alpha_{n}(\boldsymbol{x'})$, we start with some general reformulations ($N, \widetilde{N}, V \overset{i.i.d.}{\sim} \mathcal{N}(0,1)$):
\begin{align*}
    \mathbb{E}_{n,\boldsymbol{x}} \left[  \twcrps(\GPupdateU(\boldsymbol{x'}))  \right] 
    &= \int  \twcrps (\Phi_{\mu_n + \alpha_{n} v, \sigma_{n+1}^2}) \phi(v) \; \mathrm{d}v \\
    &= \int \int \Phi \left( \frac{u - \mu_n - \alpha_{n} v}{\sigma_{n+1}} \right) \left( 1- \Phi \left( \frac{u - \mu_n - \alpha_{n} v}{\sigma_{n+1}} \right) \right)  \phi(v) \; \mathrm{d}v  \; \intB  \\ 
    &= \int \int \mathbb{P} \left( \widetilde{N} \leq \frac{u - \mu_n - \alpha_{n} v}{\sigma_{n+1}} \right) \mathbb{P} \left(N \geq \frac{u - \mu_n - \alpha_{n}v}{\sigma_{n+1}} \right) \phi(v) \; \mathrm{d}v \;\intB  \\ 
    &= \int \mathbb{P} \left( \widetilde{N} \leq \frac{u - \mu_n - \alpha_{n} V}{\sigma_{n+1}} \right) \mathbb{P} \left(N \geq \frac{u - \mu_n - \alpha_{n} V}{\sigma_{n+1}} \right) \; \intB  
\end{align*}

Moving to $\wI$, we use $u' = \frac{u - \mu_n - \alpha_{n} V}{\sigma_{n+1}}$ and obtains: 
\begin{align*}
    &\mathbb{E}_{n,\boldsymbol{x}} \left[  \seqIp(\GPupdateU(\boldsymbol{x'}))  \right] \\
    &= \int_{\t}^{\infty} \mathbb{E} \left[
    \mathbbm{1} \{\widetilde{N} \leq \frac{u - \mu_n - \alpha_{n} V}{\sigma_{n+1}}  \leq N \} \right] \;\mathrm{d}u \\
    &= \sigma_{n+1} \mathbb{E} \left[ \int_{\infty}^{\infty}
    \mathbbm{1} \{ \max \left(\widetilde{N}, \frac{\t - \mu_n - \alpha_{n} V}{\sigma_{n+1}} \}   \right) \leq u' \leq N \} \; \mathrm{d}u' \right]  \\
    &= \sigma_{n+1} \mathbb{E} \left[ \left( N - \max \left(\widetilde{N}, \frac{\t - \mu_n - \alpha_{n} V}{\sigma_{n+1}} \}   \right) \right)_+ \right]
\end{align*}

Eventually, for $\wP$ using $N, \widetilde{N}, V, U' \overset{i.i.d.}{\sim} \mathcal{N}(0,1)$ and $u' = \frac{u-\t}{\sigmaY}$:
\begin{align*}
    &\mathbb{E}_{n,\boldsymbol{x}} \left[  \seqPp(\GPupdateU(\boldsymbol{x'}))  \right] 
    = \int \mathbb{P} \left( \sigma_{n+1} \widetilde{N} + \alpha_{n} V + \mu_n \leq u \leq \sigma_{n+1} N + \mu_n + \alpha_{n} V  \right) \;  \phi \left( \frac{u-\t}{\sigmaY} \right) \frac{1}{\sigmaY} \;\mathrm{d}u \\
    &= \int \mathbb{P} \left( \sigma_{n+1} \widetilde{N} + \alpha_{n} V + \mu_n \leq \sigmaY u' + \t \leq \sigma_{n+1} N + \mu_n + \alpha_{n} V  \right) \phi \left( u' \right) \;\mathrm{d}u' \\
    &= \mathbb{P} \left( \sigma_{n+1} \widetilde{N} + \alpha_{n} V + \mu_n - \sigmaY U' - \t \leq 0, 
    \sigmaY U' + \t - \sigma_{n+1} N - \mu_n - \alpha_{n} V \leq 0 \right) \\
    &= \Phi \left( \begin{pmatrix} 0 \\ 0 \end{pmatrix}; \begin{pmatrix}  \mu_n - t \\ \t - \mu_n \end{pmatrix},  
    \begin{pmatrix} 
    \sigma_{n+1}^2 + \alpha_{n}^2 + \sigmaY^2 
    & -\alpha_{n}^2 - \sigmaY^2 \\ 
    -\alpha_{n}^2 - \sigmaY^2 
    & \sigma_{n+1}^2 + \alpha_{n}^2 + \sigmaY^2
    \end{pmatrix}  \right)  
\end{align*}

\hspace{1cm}
\section{Consistency of SUR strategy based on Itw-CRPS}

We establish a consistency result for a SUR strategy relying on an integrated CRPS criterion. The settings and approach build on those of \citet{bect2019supermartingale}. A particular effort was made to stick to the notation of \citet{bect2019supermartingale} to ease referring to its results. In particular, to retain full generality, we work with a reference measure $\mu$ in the forthcoming proof, while other parts of our work focus on the case where $\mu$ is the Lebesgue measure. Our new result and proof turn out to be very close to those established in \citet{bect2019supermartingale} for the case of the integrated Bernoulli variance. Running assumptions of \citet{bect2019supermartingale}, that are recalled thereafter are assumed throughout :

  \begin{enumerate}[i) ]
  \item $\Xset$ is a compact metric space,
  \item $\xi$ has continuous sample paths,
  \item $\tau$ is continuous.
  \end{enumerate}

Assume that $\Xset$ is endowed with a countably generated sigma-algebra $\mathcal{A}$ and let $\mu$ be a finite
measure on $(\Xset,\mathcal{A})$.
%
For $(u,\thres) \in \Xset \times \Rset$, Let $p_n(u,\thres) = \Esp_n \left( \one_{\xi(u) \geq \thres} (u) \right) %
= \Prob_{n} \left( \xi(u) \geq v \right)$.
%
In the considered integrated threshold weighted CRPS settings, our measure of residual uncertainty is
\begin{equation}
  \label{eq:uncertainty_bernoulli}
  H_n =  \int_{\Xset}  \int_{\Rset} p_{n}(u,\thres) \left( 1 - p_{n}(u,\thres) \right) \dmu(u) \dgamma(\thres),
\end{equation}
which corresponds to the uncertainty functional
\begin{equation}
  \label{equ:ibv:Hcal}
  \Hcal(\nu) = \int_{\Xset \times \Rset} p_\nu \left(1 - p_\nu\right) \dmugamma,
  \qquad \nu \in \MGCX,
\end{equation}
where $p_\nu(u,\thres) = \int_\Sset \one_{f(u) \geq \thres}\, \nu(\df)$.
%
The corresponding SUR sampling criterion is 
\begin{equation*}
  J_n(x) = \Esp_{n, x} \left(
    \int_{\Xset \times \Rset} p_{n+1} \left( 1 - p_{n+1} \right) \dmugamma
  \right).
\end{equation*}

Revisiting the construction presented in \citet{bect2019supermartingale} for the integrated Bernoulli variance, one may note that the functional~\eqref{equ:ibv:Hcal} can be seen as the uncertainty functional induced by the loss function
\begin{equation}
  \label{equ:ibv:loss}
  \begin{aligned}
    L: \Sset \times \Dset & \;\to\; \Rplus,\\
    (f, d) & \;\mapsto\; \lVert \one_{{\xi(u) \geq \thres}} -d \rVert_{L^2(\mu \times \gamma)}^{2},
  \end{aligned}
\end{equation}
where $\Dset \subset L^2(\mu \times \gamma)$ is
the set of $(\mu \times \gamma)$-square integrable measurable functions on~$\Xset \times \Rset$ taking values
in~$\left[ 0, 1 \right]$. %
Indeed, for all~$\nu \in \MGCX$ and $\xi \sim \nu$,
\begin{eqnarray*}
  {\overline L}_\nu(d) %
  = \Esp \left( L(\xi,d) \right) %
  = \lVert p_\nu - d \rVert_{ L^2(\Xset)}^{2} + \int p_\nu (1 - p_\nu)\, \dmu
\end{eqnarray*}
is minimal for $d = p_\nu$, and therefore
$\Hcal(\nu) = \inf_{d \in \Dset} {\overline L}_\nu (d)$.

The following theorem establishes the convergence of SUR (or
quasi-SUR) designs associated to this novel uncertainty functional, by using the
theory developed in \citet{bect2019supermartingale}.  
%Section~\ref{sec:SUR:ufunc-loss} for regular loss
%functions.
\setcounter{theorem}{3}
\begin{theorem} \label{thm:ItwCRIPS}
  %
  The loss function~\eqref{equ:ibv:loss} is regular (in the sense of Definition 3.18 of \cite{bect2019supermartingale}. As a consequence, $\Hcal(\Prob_n^\xi) \xrightarrow[]{\as}  0$
  for any quasi-SUR design associated with~$\Hcal$.
  %
\end{theorem}

\newcommand \ProofItem[1] {%
  \medskip \noindent \textbf{#1}}

\begin{proof}%[Proof of Thm]
  The proof consists in checking the six points of the definition of regular losses, as follows. Overall, the steps are closely following those made in Bect et al. for the integrated Bernoulli variance. e) and f) require a bit of care, with somehow surprisingly general settings in the case of point f).     

  \ProofItem{a)} %
  $\Dset$ is separable

  The Lebesgue space $L^2(\mu\times \gamma)$ is a separable metric space since the sigma-algebras equipping $\Xset$ and $\Rset$ are countably generated. 
  %$\Xset$ is a separable measure space (see, e.g., Theorem~4.13 in \cite{brezis2010functional}). 
  Hence $\Dset$ is also separable.

  \ProofItem{b)} %
  for all~$d \in \Dset$, $L(\cdot, d)$ is $\Scal$-measurable: 
  $f \mapsto \int_\Xset \int_\Rset \left( \one_{f(u)\ge v} - d(u) \right)^2\,
  \dmu(u)\dgamma(v)$ is $\Scal$-measurable by Fubini's theorem since the
  integrand is $\Scal \otimes \mathcal{A} \mathcal{B}(\Rset)$-jointly measurable
  in~$(f, u,v)$.

  \ProofItem{c)} %
  for all $\nu \in \MGCX$, ${\overline L}_\nu$ takes finite
  values and is continuous on~$\Dset$: ${\overline L}_\nu$ is finite since the loss is
  upper-bounded by~$\mu(\Xset)$, and its continuity follows from the continuity of the norm.

  \ProofItem{d)} %
  $\Hcal = \Hcal_0 + \Hcal_1$, where
  $\Hcal_0(\nu) = \int_\Sset L_0\, \dnu$ for some
  $L_0 \in \cap_{\nu \in \MGCX} \mathcal{L}^1 \left( \Sset, \Scal, \nu
  \right)$, and $\Hcal_1$ is \UiFgcd: holds with $L_0 = 0$ and $\Hcal_1 = \Hcal$ as $\Hcal$ is \UiFgcd since $L$ is upper-bounded.

% \begin{comment}
  \ProofItem{e)} %
  $\Hcal_1$ is \AscFgcd. The proof closely follows \cite{bect2019supermartingale} up to moving to product spaces and measures, resulting in adapted random sets $A(\omega)$ and $B(\omega)$. 

  Let $\xi \sim \GP (m, k)$ and let $(\nug_n)$ be a sequence of random
  measures $\nug_n \in \Pfrak(\xi)$ such that a.s.
  $\nug_n \to \nug_{\infty} \in \Pfrak(\xi)$.
  %
  For $n \in \Nset \cup \{ \infty \}$, let $m_n$ and $k_n$ be the (random) mean
  and covariance functions of~$\nug_n$.
  %
  For $u \in \Xset$ and $n \in \Nset \cup \{ \infty \}$, let also
  $\sigma^2(u) = k(u,u)$, $\sigma^2_n(u) = k_n(u,u)$, and
  %
  \begin{equation*}
    g_n(u,v) = g \left( \bar{\Phi} \left( \frac{v - m_n(u)}{\sigma_n(u)}
      \right) \right),
  \end{equation*}
  where $g(p) = p (1 - p)$ and $\bar{\Phi}(v) = P( Z \geq v )$ where
  $Z$ is a standard Gaussian variable, with the convention that
  $\bar{\Phi}(0/0) = 1$.
  %
  We will prove below that, for all $n \in \Nset \cup \{ + \infty \}$,
  \begin{equation}
    \label{eq:int_A}
    \Hcal( \nug_n ) = \int_{\Xset} g_n(u,v)\, \dmu(u) \dgamma(v) \eqas
    \int_A g_n(u,v)\, \dmu(u) \dgamma(v)),
  \end{equation}
  %
  where $A$ denotes the random subset of~$\Xset \times \Rset$ defined by
  \begin{equation*}
  A(\omega)=\{ (u,v) : \sigma(u)>0, \sigma_{\infty}(\omega,u) =
  0 , m_{\infty}(\omega,u) \neq v \}\cup \{ (u,v): \sigma(u) >0,
  \sigma_{\infty}(\omega,u) > 0 \}.
  \end{equation*}
  %
%  The motivation for using~\eqref{eq:int_A} is that it is easy to prove the convergence of $g_n(\omega, u)$ for $u \in A(\omega)$ and that the set $A(\omega)$ does not depend on~$n$, which makes it possible to conclude using the dominated convergence theorem on the set $A(\omega)$ for almost all $\omega$.
  %
%  In more detail: 
Since $\nug_n \to \nug_{\infty}$ almost surely, it
  holds for almost all $\omega \in \Omega$ that
  $m_n(\omega, \cdot) \to m_{\infty}(\omega, \cdot)$ and
  $\sigma_n(\omega, \cdot) \to \sigma_{\infty}(\omega, \cdot)$
  uniformly on~$\Xset$.
  %
  Furthermore, for each $(u,v) \in A(\omega)$, either
  $\sigma_{\infty}(\omega, u) >0$ or
  $\sigma_{\infty}(\omega, u) =0, m_{\infty}(\omega, u) \neq v$.
  %
  In both cases, we have that: \\
  $g\left( \bar{\Phi}([m_n(\omega, u) - v]/ \sigma_n(\omega, u) )
  \right) \to g\left( \bar{\Phi}([m_{\infty}(\omega,
    u)-v]/\sigma_{\infty}(\omega, u)) \right)$.
  %
  So, for almost all $\omega \in \Omega$ we obtain by dominated
  convergence theorem on $A(\omega)$ that
  \begin{equation*}
    \Hcal( \nug_n )= \int_A g_n(u,v)\, \dmu(u) \dgamma(v)
    \;\xrightarrow[n \to \infty]{\text{a.s.}}\;
    \Hcal( \nug_{\infty} )= \int_A g_{\infty}(u,v)\, \dmu(u) \dgamma(v).
  \end{equation*}

%% %

  To prove~\eqref{eq:int_A}, let us observe first that for any $u$ such that $\sigma(u) = 0$,
  $\sigma_n(u) \eqas 0$ for all $n \in \Nset \cup \{ \infty \}$ since $\nug_n \in \Pfrak(\xi)$.
  %
  Hence, $g_n(u) \eqas 0$ when $\sigma(u) = 0$.
  %
%  [This is because of the convention $\bar{\Phi}(0/0) = 1$, which yields $g_n(u) = g( \mathbf{1}_{ m_n(u) \geq T } ) = 0$ when $\sigma_n(u) = 0$, regardless of whether $m_n(u) = T$ or not.]
  %
  Thus, setting
  $B(\omega) = \{ (u,v);\, \sigma(u) > 0,\,
  \sigma_{\infty}(\omega, u) = 0,\, m_{\infty}( \omega, u) = v \}$, we
  have for all $\omega \in \Omega$
  \begin{equation*}
    \Xset = \{ u \in \Xset, \sigma(u) = 0 \} \cup A( \omega ) \cup B( \omega ).
  \end{equation*}
  %
%  and the three sets of the right-hand side of the previous display
%  are disjoint.
  %
%  Since, as discussed above, $g_n(u) \eqas 0$ for any $u$ such that
%  $\sigma(u) = 0$, we obtain
%  \begin{equation*}
%    \Hcal(\nug_n) \eqas \int_A g_n(u)\, \mu(\du) + \int_B g_n(u)\, \mu(\du).
%  \end{equation*}
  % 
  In order to prove~\eqref{eq:int_A}, it is sufficient
  to show that
%  \begin{equation} \label{eq:to:show:int:bomega:zero}
    $\int_B g_n(u,v)\, \dmu(u) \dgamma(v) \eqas 0$,
%  \end{equation}
  %
%  We will now establish~\eqref{eq:to:show:int:bomega:zero} 
  which we now establish by proving that $\mu(B) = 0$ almost surely.  First, %
  since $(\omega, u) \mapsto m_\infty(\omega,u)$ and
  $(\omega, u) \mapsto \sigma_\infty(\omega,u)$ are jointly measurable,
%  (by continuity of~$m_\infty(\omega, \cdot)$ and $\sigma_\infty(\omega, \cdot)$ for all $\omega \in \Omega$), 
it follows from the Fubini-Tonelli theorem that
  %
  \begin{equation} \label{eq:Domega}
    \Esp( (\mu\times \gamma)(B) ) =   \int_{\Xset \times \Rset }
    \one_{\sigma(u) > 0}\, \Esp( \one_{\sigma_{\infty}(u) = 0}
    \one_{m_{\infty}(u) = v} ) \dmu(u) \dgamma(v).
  \end{equation}
 
 \noindent     
  Directly using \cite{bect2019supermartingale}'s intermediate result that, for any $u \in \Xset$,
  the implication
  \begin{equation*}
    \sigma_{\infty}(u) = 0 \Longrightarrow
    \xi(u) = m_{\infty}(u) 
  \end{equation*}
  holds almost surely, we have for any $(u,v)$
  \begin{align*}
   0 \leq \one_{\sigma(u) > 0}\, \one_{\sigma_{\infty}(u) = 0}
    \one_{m_{\infty}(u) = v}
     =
      \one_{\sigma(u) > 0} \one_{\sigma_{\infty}(u) = 0} \one_{ \xi(u) = v} 
     \leq
      \one_{\sigma(u) > 0}  \one_{ \xi(u) = v}
    = 0
  \end{align*}
  almost surely, since $\xi(u) \sim \mathcal{N} ( 0 ,
  \sigma(u)^2)$.  
  %
  Hence, the integrand in the right-hand side of~\eqref{eq:Domega} is zero,
  which implies that $\Esp\left( (\mu \times \gamma) (B) \right) = 0$, and therefore
    $(\mu \times \gamma) (B) \eqas 0$ since $(\mu \times \gamma)(B)$ is a non-negative random variable.
  %
  Thus~\eqref{eq:int_A} holds and the proof of {\bf e)} is complete.

  \ProofItem{f)} %%%%%%%%%%%%%%%%%%%%%%%%%%%%%%%%%%%%%%%%%%%%%%%%%%%%%%%%%%%
  $\Zset_\Hcal = \Zset_\Gcal$

  Let $\nu \in \Zset_{\Gcal}$ and let $\xi \sim \nu$. Let
  $m,k,\sigma^2$ be defined as above.
  %
  Let $U \sim \mathcal{N}(0,1)$ be independent of~$\xi$.
  %
  Since $\Gcal(\nu) = 0$, we have from the law of
  total variance
  \begin{eqnarray*}
    \int_{\Xset \times \Rset } \var \left( \Esp
    \left( \one_{\xi(u) \geq T} | Z_x \right) \right) \dmu(u) \dgamma(v) = 0
  \end{eqnarray*}
  for all $x \in \Xset$, where $Z_x = \xi(x) + \tau(x) U$.
  %
  Hence, for all $x \in \Xset$, we have
  %
  \begin{equation*}
    \var \left(
      \bar{\Phi} \left(
        \frac{%
          v -  m(u) - \frac{k (x, u)\, (Z_x - m(x))}{\sigma^2(x) + \tau^2(x)}%
        }{%
          \sqrt{   \sigma^2(u) - \frac{k(x,u)^2}{\sigma^2(x)+\tau^2(x)}}
        }
      \right)
    \right) = 0,
  \end{equation*}
  for $(\mu,\gamma)$-almost every $(u,v)$.
  %
  For such an $(u,v)$, the latter implies that for all $x \in \Xset$ $k (x, u) = 0$, where form follows   
  %(as can be proved without difficulty by separating the cases of nullity and non-nullity of the denominator).
  %
  %Thus, if there exists $x^*$ for which
  %$\sigma^2(x^*) = k(x^*, x^*) > 0$, we obtain a contradiction, since
  %then $k (x, u) > 0$ in a neighborhood of~$x^*$ by continuity.
  %
  that $\sigma^2(x) = 0$ for all $x \in \Xset$, and therefore $\Hcal(\nu) = 0$.
  %
\end{proof}

In the next proposition, we refine Theorem~\ref{thm:ItwCRIPS}. 
%by showing that it entails a consistent estimation of the excursion set $\Gamma(\xi)$.

\begin{proposition} \label{prop:consist:p:un:moins:p}
  %
  For any quasi-SUR design associated with $\Hcal$, as $n \to \infty$,
  almost surely and in~$L^1$,
  \begin{equation*}
    \int_{\Xset \times \Rset} \left( \one_{\xi(u) \geq v} - p_n(u,v) \right)^2 \dmu(u) \dgamma(v) \to 0
  \end{equation*}
  and
  \begin{equation*}
    \int_{\Xset \times \Rset} \left( \one_{\xi(u) \geq v}
      - \one_{p_n(u,v) \geq 1/2} \right)^2 \dmu(u) \dgamma(v) \to 0.
  \end{equation*}
  %
\end{proposition}

\begin{proof}
  %
  From steps \textbf{e)} and \textbf{f)} in the proof of
  Theorem~\ref{thm:ItwCRIPS}, it follows that
  \begin{equation*}
    \int_{\Xset} \left( \one_{\xi(u) \geq T} - p_n(u) \right)^2 \dmu(u) \dgamma(v) \eqas
    \int_A \left( \one_{\xi(u) \geq T} - p_n(u) \right)^2 \dmu(u) \dgamma(v).
  \end{equation*}
  %
  Also, for almost all~$\omega \in \Omega$ and all
  $(u,v) \in A( \omega )$,
  $p_{n}(\omega, u,v) \to \one_{\xi(\omega, u) \geq v} $ as
  $n \to \infty$ since $\sigma_{\infty} \equiv 0$ a.s.\ from the proof
  of \textbf{f)} in Theorem~\ref{thm:ItwCRIPS} and the conclusion
  of this theorem.
  %
  Hence the first part of the proposition follows by applying the
  dominated convergence theorem twice.
  %
  The proof of the second part of the proposition is identical.
  %
\end{proof}

\bibliographystyle{apalike} %or ieeetr, plainnat
\bibliography{ref}